\PassOptionsToPackage{pagebackref,breaklinks,colorlinks,allcolors=cvprblue}{hyperref}
\documentclass[10pt,twocolumn,letterpaper]{article}

\usepackage{cvpr}              
\usepackage[utf8]{inputenc} 
\usepackage[T1]{fontenc}    
\usepackage{wrapfig}
\usepackage{hyperref}       
\usepackage{url}            
\usepackage{booktabs}       
\usepackage{amsfonts}       
\usepackage{nicefrac}       
\usepackage{microtype}      
\usepackage{xcolor}         
\usepackage{graphicx}
\usepackage{float}
\usepackage{subcaption}
\usepackage{enumitem} 
\usepackage{amsmath,amsthm}

\theoremstyle{plain}
\newtheorem{theorem}{Theorem}
\newtheorem{lemma}[theorem]{Lemma} 
\newtheorem{assumption}{Assumption}

\usepackage{bm}
\usepackage{algorithm}
\usepackage{algpseudocode}
\definecolor{cvprblue}{rgb}{0.21,0.49,0.74}

\def\confName{CVPR}
\def\confYear{2026}

\title{Personalized Longitudinal Medical Report Generation via Temporally-Aware Federated Adaptation}

\author{He Zhu, Ren Togo, Takahiro Ogawa, Kenji Hirata, Minghui Tang, Takaaki Yoshimura, \\
Hiroyuki Sugimori, Noriko Nishioka, Yukie Shimizu, Kohsuke Kudo, Miki Haseyama\\
Hokkaido University
}

\begin{document}
\maketitle
\begin{abstract}
Longitudinal medical report generation is clinically important yet remains challenging due to strict privacy constraints and the evolving nature of disease progression. Although federated learning (FL) enables collaborative training without data sharing, existing FL methods largely overlook longitudinal dynamics by assuming stationary client distributions, making them unable to model temporal shifts across visits or patient-specific heterogeneity—ultimately leading to unstable optimization and suboptimal report generation.
We introduce Federated Temporal Adaptation (FTA), a federated setting that explicitly accounts for the temporal evolution of client data. Building upon this setting, we propose FedTAR, a framework that integrates demographic-driven personalization with time-aware global aggregation. FedTAR generates lightweight LoRA adapters from demographic embeddings and performs temporal residual aggregation, where updates from different visits are weighted by a meta-learned temporal policy optimized via first-order MAML.
Experiments on J-MID (1M exams) and MIMIC-CXR demonstrate consistent improvements in linguistic accuracy, temporal coherence, and cross-site generalization, establishing FedTAR as a robust and privacy-preserving paradigm for federated longitudinal modeling.
\end{abstract}    
\section{Introduction}
Automatic medical report generation~\cite{kisilev2015medical, messina2022survey}, particularly from longitudinal chest CT scans that capture disease progression~\cite{newell2004report}, is central to reliable clinical diagnosis~\cite{hosny2018artificial}. By summarizing sequential imaging evidence and modeling temporal patterns across visits, it alleviates the manual reporting burden and improves diagnostic consistency. However, constructing temporally coherent and generalizable models remains challenging due to strict privacy regulations and fragmented institutional data silos~\cite{rajpurkar2022ai, chiruvella2021ethical}. Crucially, these constraints do not only limit the amount of available data, but also break the assumption that a patient's studies can be viewed as i.i.d. samples from a single static distribution. These barriers exacerbate inter-site and temporal heterogeneity, calling for a federated and time-aware setting capable of learning from distributed longitudinal cohorts without compromising privacy.

\begin{figure}[t]
  \centering
    \begin{subfigure}[b]{0.5\linewidth}
    \includegraphics[width=\linewidth]{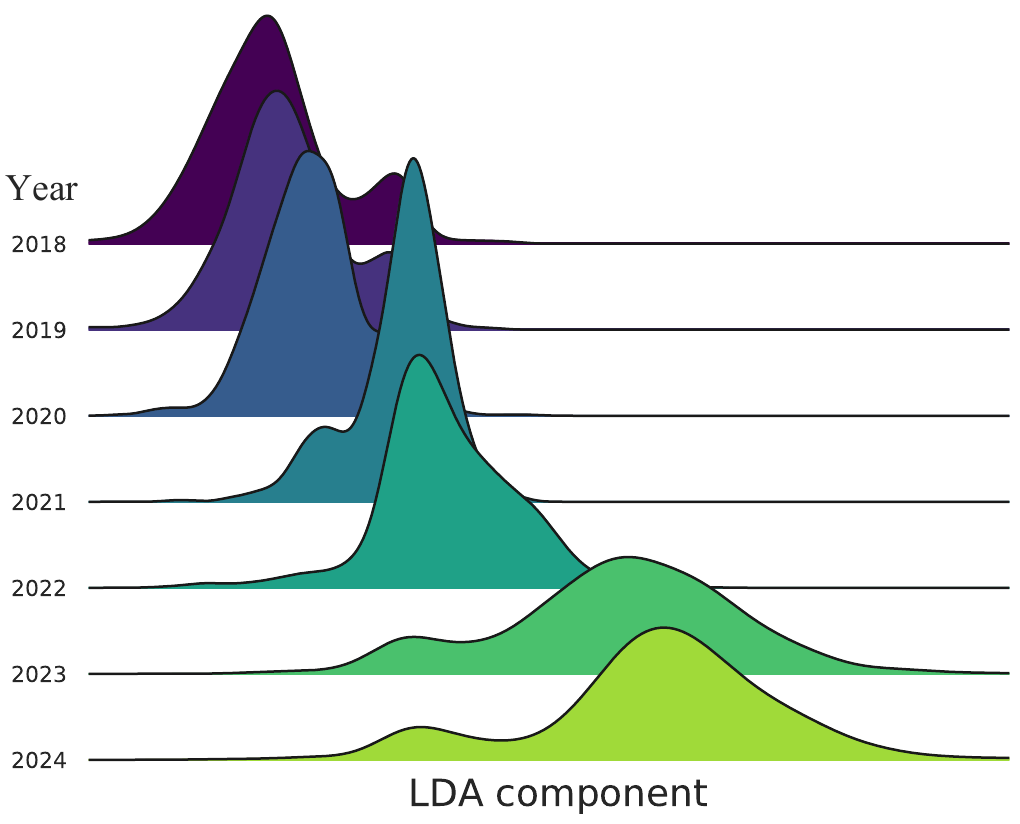}
    \caption{}
    \label{fig:intro_sub2}
  \end{subfigure}
  \hfill
  \begin{subfigure}[b]{0.4\linewidth}
    \includegraphics[width=\linewidth]{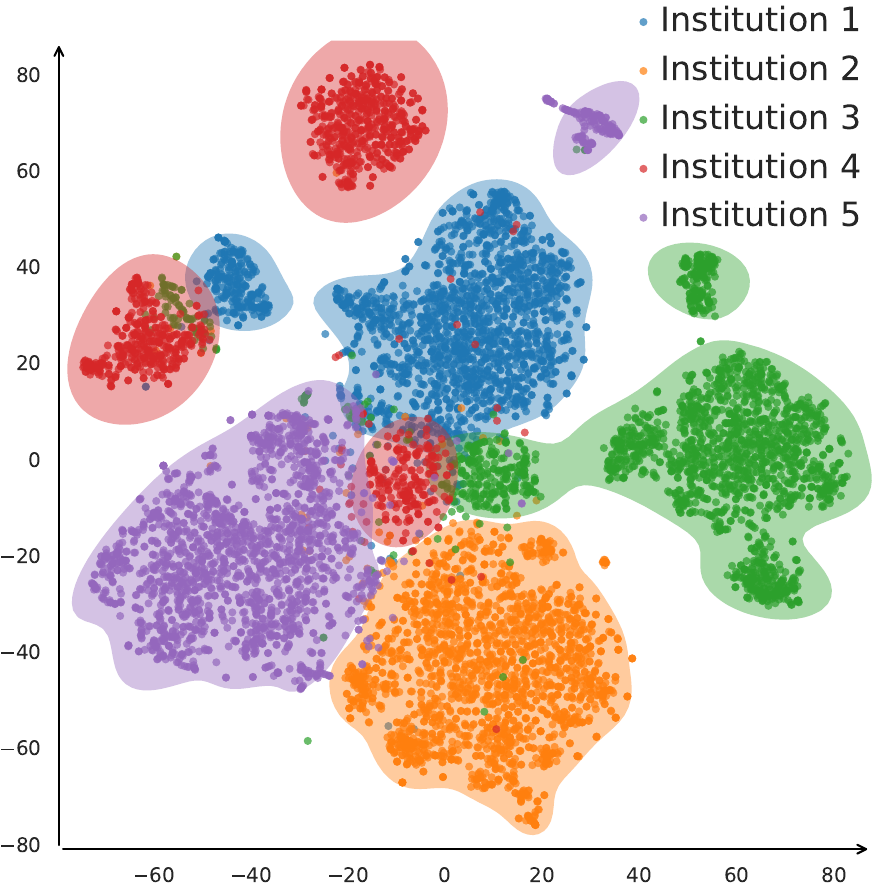}
    \caption{}
    \label{fig:intro_sub1}
  \end{subfigure}
  \caption{Visualization of semantic structure evolution in medical reports. All data are derived from the J-MID dataset. (a) Ridgeline kernel-density plots of LDA-derived topic components for annual reports (2018–2024) from one institution. (b) T-SNE embedding of TF-IDF vectors for reports from five institutions. }
  \label{fig:intro}
  \vspace{-2em}
\end{figure}

Federated Learning (FL) has emerged as a central paradigm for privacy-preserving collaboration across medical institutions~\cite{FedAvg, li2021survey, adnan2022federated, guan2024federated}, yet most formulations remain temporally static, modeling each client as a fixed data distribution. Recent FL approaches to medical report generation, including FedMRG~\cite{metmer2025fedmrg}, FedMME~\cite{wang2025multi}, and Chen et al.~\cite{chen2024medical}, follow this setting and treat all examinations on a client as if they were drawn from a single time-invariant distribution, thus ignoring longitudinal disease trajectories and patient-specific evolution. In practice, however, temporal and cross-site drift are pronounced: in our longitudinal chest CT cohort, the semantic distributions of radiology reports shift markedly across years, and study embeddings form well-separated clusters across institutions (Fig.~\ref{fig:intro}). Under such temporal heterogeneity, conventional ``one-size-fits-all'' aggregation strategies implicitly assume stationarity and tend to average out progression-related signals that are crucial for clinical interpretation. In other words, existing FL formulations are client-distribution–aware but essentially time-agnostic, leaving the longitudinal nature of medical report generation fundamentally under-modeled and calling for a federated setting that performs personalized, time-aware adaptation to evolving patient states and cross-institutional diversity.

We formalize Federated Temporal Adaptation (FTA), a novel FL problem setting that enables time-aware and personalized modeling under evolving data distributions and privacy constraints. FTA addresses temporal non-stationarity across sequential observations and entity-specific heterogeneity, challenges pervasive in longitudinal, streaming, and cross-institutional domains. To instantiate this setting, we propose FedTAR, a unified framework that integrates demographic-conditioned personalization and meta-learned temporal aggregation for parameter-efficient and theoretically stable optimization.
At the personalization level, each client encodes demographic information into compact statistical representations that drive a lightweight hypernetwork to produce patient-specific low-rank adapters. This mechanism provides a parameter-efficient way to capture who varies across federated participants while preserving privacy by avoiding the exchange of raw demographics.
At the global level, FedTAR performs temporal residual aggregation, learning visit-dependent weights that balance recent and historical updates. This yields a transferable temporal prior that captures when variation occurs, supporting stable optimization under evolving client distributions.
Together, these personalized adapters and temporal aggregation form the first concrete instantiation of FTA, jointly modeling who varies and when change occurs to achieve temporally coherent and privacy-preserving federated report generation.

We use the Japan Medical Image Database (J-MID)\footnote{\url{https://www.radiology.jp/j-mid/}}\,\cite{fujita2024advancing}, which contains $\approx1$ M CT/MR exams ($\approx300$ M DICOM slices) from more than ten hospitals.
The dataset links repeated scans of each patient, providing a unique large-scale resource for evaluating both cross-site federated and longitudinal models.
%
We also validated the framework on public datasets MIMIC-CXR~\cite{johnson2019mimic}, demonstrating its robustness across diverse clinical domains.

Our primary contributions can be summarized as follows:
\begin{itemize}
\item We introduce Federated Temporal Adaptation (FTA), a new FL setting that makes temporal evolution a first-class component of federated optimization.
\item We present FedTAR, the first instantiation of FTA, unifying personalized low-rank adapters with temporal residual aggregation for stable and efficient learning under evolving client distributions.
\item Experiments on J-MID and MIMIC-CXR demonstrate improved report quality, temporal coherence, and cross-site generalization, supported by convergence analysis.
\end{itemize}
\section{Related Work}
\subsection{Federated Learning for Medical Applications}
Federated Learning (FL)~\cite{guan2024federated, brisimi2018federated, sheller2020federated} enables multi-institutional model training without data sharing~\cite{lu2024federated}. Beyond early classification tasks, recent studies explored generative objectives such as medical report synthesis.
Chen and Pan~\cite{chen2024federated} combined FedAvg with evidence-driven optimization to improve global modeling, while FedMRG~\cite{metmer2025fedmrg} mitigated inter-client heterogeneity but assumed static data.
FedMME~\cite{wang2025fedmme} exploited large vision–language models for single-round FL, focusing on non-Independent and Identically Distributed (non-IID) robustness rather than temporal evolution.
Parameter-efficient personalization methods like PerAda~\cite{xie2024perada} and pFedLoRA~\cite{yi2024pfedlora} tailor local adapters yet remain time-invariant.
Collectively, these works advance multimodal federated generation but overlook temporal non-stationarity and longitudinal patient dynamics—the core focus of our Federated Temporal Adaptation setting.
\begin{figure}[t]
\centering
  \begin{subfigure}[t]{\linewidth}
    \centering
    \includegraphics[width=\linewidth]{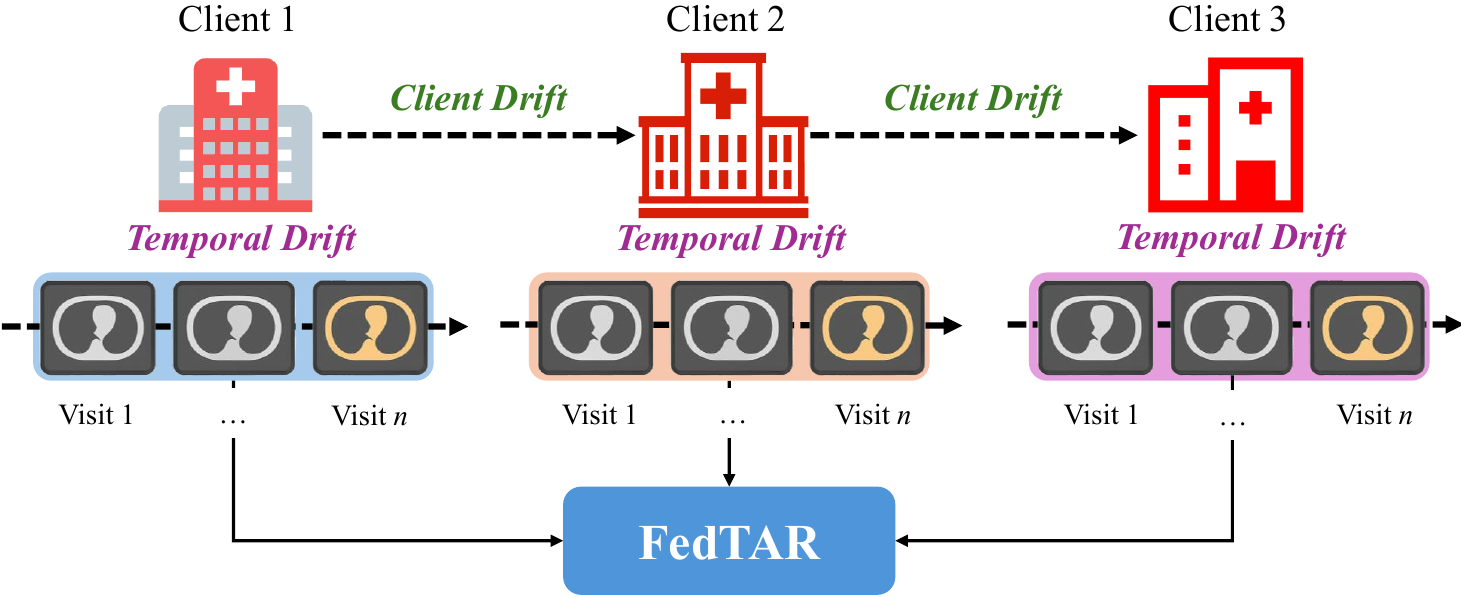}
    \caption{}
  \end{subfigure}
  \hfill
    \begin{subfigure}[t]{0.8\linewidth}
    \centering
    \includegraphics[width=\linewidth]{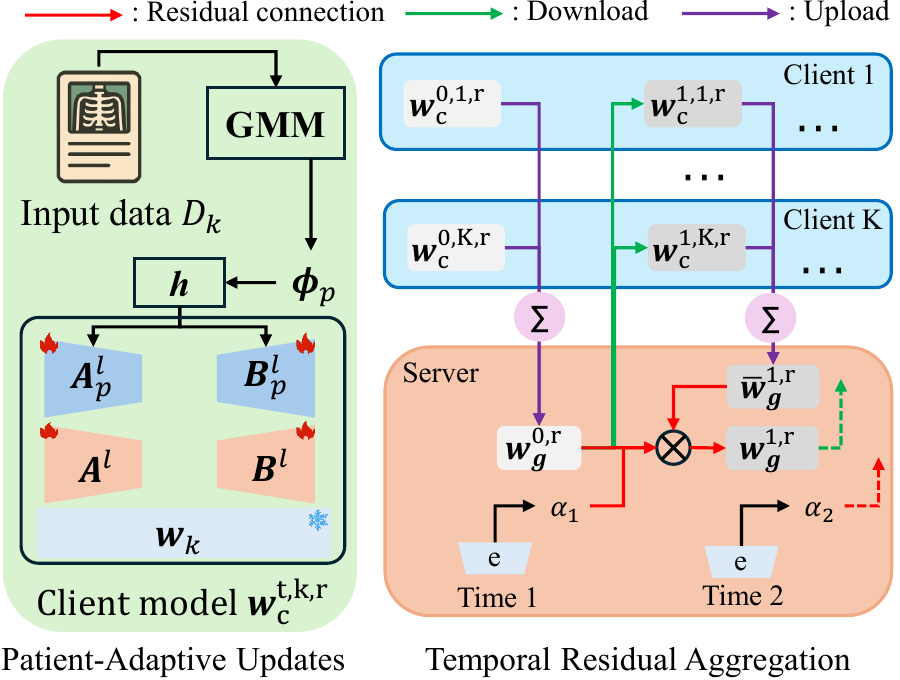}
    \caption{}
  \end{subfigure}
  \caption{Overview of the problem and our approach. (a) Client- and temporal-drift challenges in federated longitudinal medical report generation. (b) The proposed FedTAR framework.}
  \label{fig:overview}
\end{figure}
\subsection{Temporal Longitudinal Medical Imaging}
Longitudinal imaging is vital for tracking disease progression and guiding diagnosis~\cite{gerig2016longitudinal, jin2021predicting}, yet early report generation models overlooked temporal dependencies by treating each scan independently.
Hamamci et al.~\cite{hamamci2024ct2rep} introduced CT2Rep and CT2RepLong, which employ memory-driven decoders and cross-attention to fuse multi-timepoint CTs and prior reports, improving temporal coherence and diagnostic fidelity.
Though centralized, these works motivate privacy-preserving longitudinal modeling in federated settings.
In parallel, Ghari and Shen~\cite{ghari2024streamfed} enhanced streaming robustness via checkpoint ensembling to counter concept drift~\cite{gama2014survey}, while Chen et al.~\cite{chen2022timefed} proposed time-weighted aggregation for evolving data.
Together, these advances highlight the need for temporal fusion~\cite{mei2024medical} and dynamic optimization~\cite{atteia2023adaptive}, whose integration into federated medical report generation remains largely unexplored.
\section{Methodology} \label{section:method}
\subsection{FTA Problem Definition}
We consider Federated Temporal Adaptation (FTA) as a federated learning setting where each client holds sequential data collected over multiple time steps.
Formally, client $k \in \{1,\dots,K\}$ possesses a local dataset $\mathcal{D}_k = \{p_n\}_{n=1}^{n_k}$, and each sequence
\[
p_n = \{x_{k,t}, y_{k,t}\}_{t=1}^{T}, (x_{k,t}, y_{k,t})\sim P_{k,t},
\]
contains the input $x_{k,t}$ observed at time step $t$ and the corresponding target $y_{k,t}$ drawn from a time-varying client-specific distribution $P_{k,t}$.
The global objective is to learn a model that performs well across both clients and time, typically written as
\begin{equation}
\min_{\bm{w}}
\sum_{k=1}^{K} \sum_{t=1}^{T}
\mathcal{L}\bigl(f(\bm{w}; x_{k,t}),\, y_{k,t}\bigr), 
\end{equation}
where $\bm{w}$ denotes the global model parameters, $f(\cdot;\bm{w})$ is the prediction model, and $\mathcal{L}$ is a task-specific loss function.

Compared with standard FL, which assumes each client $k$ draws i.i.d. samples $(x,y)\sim P_k$ from a stationary distribution, and online FL, which receives streaming mini-batches $\mathcal{B}_{k,\tau}\sim Q_{k,\tau}$ over rounds $\tau$ where the time index is attached to the optimization process rather than the data distribution, FTA explicitly models client-wise temporal distributions $\{P_{k,t}\}_{t=1}^{T}$ with sequences $\{(x_{k,t},y_{k,t})\}_{t=1}^{T}$, and thus making temporal drift a first-class component of the federated objective. 
The learning objective is therefore to obtain a global model that adapts coherently across time while remaining consistent across clients under privacy constraints. A more formal comparison between FTA and other FL problem definitions is provided in the Appendix.
In this work, we instantiate this general formulation in the context of longitudinal medical report generation, where sequential imaging and textual records are distributed across institutions.
\subsection{Preliminary}
An overview of our method is shown in Fig.~\ref{fig:overview}. We propose a novel multimodal FL framework consists of two sequentially coupled modules: (1) Client-Side Patient-Adaptive Updates, where we encode low-dimensional patient data via GMMs and dynamically generate personalized LoRA weights for local fine-tuning; and (2) Temporal Residual Aggregation, which computes residuals at each temporal step and employs hypergradient-based meta-learning to adaptively weight and aggregate client updates. We further integrate a hierarchical communication protocol to minimize overhead. Consequently, we deliver both theoretical soundness and practical efficiency for longitudinal medical report generation.
\subsection{Client-Side Patient-Adaptive Updates}
To personalize local adaptation, we condition LoRA weight generation on patient-specific data. Each client $k$ receives global parameters $w$ and its patient demographics $\mathrm{info}_k=\{\mathrm{id}_p,\mathrm{age}_p,\mathrm{sex}_p\}_{p=1}^{n_k}$.
We compute a normalized profile vector $\bm{v}_p\in\mathbb{R}^3$ for each patient $p$ as:
\begin{equation}
    \bm{v}_p = \bigl[\operatorname{Norm}(\mathrm{hash}(\mathrm{id}_p)), \operatorname{Norm}(\mathrm{age}_p), \mathrm{enc}(\mathrm{sex}_p)\bigr]^\top.
\end{equation}
Then, we encode $\bm{v}_p$ into a soft cluster assignment $\mathbf{q}_p$ via a GMM as $\mathbf{q}_p = \mathrm{GMM}(\bm{v}_p)\in\Delta^{n_{comp}-1}$. The soft assignment captures the probabilistic membership of $\bm{v}_p$ across $n_{comp}$ latent clusters, allowing for smooth interpolation between different patient subgroups. Subsequently, the probability vector $\mathbf{q}_p$ is linearly projected into a $d$-dimensional embedding space through a learnable transformation, yielding the patient embedding $\bm{\phi}_p = \mathbf{W}_{\mathrm{proj}}\mathbf{q}_p + \mathbf{b}_{\mathrm{proj}}\in\mathbb{R}^d$. This projection enables the model to adaptively reweight cluster contributions based on downstream tasks.

Thereafter, we use each patient embedding $\bm{\phi}_p$ to modulate a low-rank adapter. Let $W^l_{\mathrm{client}}\in\mathbb{R}^{e\times e}$ be the $l$-th client model weight block, and suppose our client LoRA adapter is
\[
\Delta W^l_{\rm client} \;=\; A^l B^{l\top},
\quad A^l\in\mathbb{R}^{e\times r},\;B^l\in\mathbb{R}^{r\times e}.
\]
We introduce a lightweight hypernetwork $h^l: \mathbb{R}^d \;\to\; \mathbb{R}^{(e\times r)+(r\times e)}$
that maps $\bm{\phi}_p$ to patient-specific adapter parameters as
\begin{equation}
    \bigl[A_p^l(\bm{\phi}_p),\,B_p^l(\bm{\phi}_p)\bigr] \;=\; h^l(\bm{\phi}_p),
\end{equation}
and define the total adapted weight for patient $p$ at layer $l$ as follows:
\begin{equation}
W^l_p \;=\; 
W^l_{\mathrm{client}} 
\;+\;\underbrace{A^l B^{l\top}}_{\Delta W^l_{\rm client}}
\;+\;\underbrace{A_p^l(\bm{\phi}_p)\,B_p^l(\bm{\phi}_p)^{\top}}_{\Delta W^l_p}.
\end{equation}
At each forward pass for client $k$, we retrieve its demographics set $\{p\in D_k\}$, compute each $\bm{\phi}_p$, instantiate $\{\Delta W^l_p\}$ via the hypernetworks, and apply $W^l_p$ in place of $W^l_{\mathrm{client}}$.
Training proceeds by minimizing the standard clinical loss
\begin{equation}
\mathcal{L}\bigl(\{W^l_{\mathrm{client}}\},\{A^l,B^l\},\{h^l\}\bigr)
\;=\;
\sum_{(I,R)\in\mathcal{D}_k}
\ell\bigl(f(I;W_p),\,R\bigr),
\end{equation}
where gradients flow into the client model weights $W^l_{\mathrm{client}}$, the shared adapter factors $\{A^l,B^l\}$, the hypernetwork parameters $\{h^l\}$, and the projection of $\bm{\phi}_p$.

Overall, our client-side patient-adaptive updates integrate demographic-conditioned soft clustering with hypernetwork-based LoRA, enabling fine-grained personalization with minimal communication overhead. This design allows smooth interpolation across latent subpopulations and improves generalization under heterogeneous clinical data (Algorithm~\ref{alg:patient_adaptive_updates}).
\begin{algorithm}[H]
\caption{Client‐Side Patient‐Adaptive Updates}
\label{alg:patient_adaptive_updates}
\begin{algorithmic}[1]
\Require LoRA adapter $\{A^l,B^l\}_{l=1}^L$, hypernetworks $\{h^l\}_{l=1}^L$, projection params $(\mathbf{W}_{\mathrm{proj}},\mathbf{b}_{\mathrm{proj}})$, local data ${(I,R)}$
\ForAll{patients $p \in D_k$}
  \State $\displaystyle \bm v_p \leftarrow \bigl[\mathrm{Norm}(\mathrm{hash}(\mathrm{id_p})),\;\mathrm{Norm}(\mathrm{age_p}),\;\mathrm{enc}(\mathrm{sex_p})\bigr]^\top$
  \State $\displaystyle \mathbf q_p \leftarrow \mathrm{GMM}(\bm v_p)$; $\displaystyle \bm{\phi}_p \leftarrow \mathbf{W}_{\mathrm{proj}}\,\mathbf q_p + \mathbf{b}_{\mathrm{proj}}$
    \For{$l=1$ to $L$}
    \State $(A_p^l, B_p^l) \leftarrow h^l(\bm{\phi}_p)$
      \State $\Delta W^l_{\rm client} \leftarrow A^l\,B^{l\top}$; $\Delta W^l_p \leftarrow A_p^l\,B_p^{l\top}$
    \State $W^l_p \leftarrow W^l_{\mathrm{client}} + \Delta W^l_{\rm client} + \Delta W^l_p$
  \EndFor
\EndFor
\State Compute $\displaystyle \mathcal{L} \;=\;\sum_{(I_p,R_p)\in D_k} \ell\bigl(f((I_p;\{W_p^l\}),R_p)\bigr)$
\State Update $\{A^l,B^l\},\{h^l\},\mathbf{W}_{\mathrm{proj}},\mathbf{b}_{\mathrm{proj}}$ via backprop
\end{algorithmic}
\end{algorithm}
\subsection{Temporal Residual Aggregation}
Longitudinal clients continuously accumulate new visits, causing their optimization trajectories to drift in both direction and speed; because static parameter averaging cannot anticipate or align these future temporal shifts, we aggregate residual updates, and use meta-learning to adapt their weights so the global model can predictively adjust to fast-changing, temporally expanding data.
\subsubsection{Two Layer Temporal Aggregation}  
At communication round $r$, each client $k$ sequentially uploads its locally fine-tuned models  
$\{\,\bm{w}_c^{t, k, r}\}_{t=0}^T$ after training on time-point $t$ data.  
At time step $t$, we perform a single round of local weighted model aggregation to obtain the base global model weights $\bar{\bm{w}}_g^{(t,r)}$:
\begin{equation}
  \bar{\bm{w}}_g^{(t,r)} = \sum_{k=1}^K \frac{1}{n_k} \bm{w}_c^{t, k, r}\,.
  \label{eq:time_avg}
\end{equation}
Subsequently, rather than discarding previous temporal information, we incorporate a residual correction as follows:
\begin{equation}
      \bm{w}_g^{(t,r)} = \bm{w}_g^{(t-1,r)} + \alpha_t \bigl(\bar{\bm{w}}_g^{(t,r)} - \bm{w}_g^{(t-1,r)}\bigr), \label{eq:residual_update}
\end{equation}
where we initialize \(\bm{w}_g^{(0,r)} = \bm{w}_g^{(T,r-1)}\) for \(r>1\) and \(\bm{w}_g^{(0,1)}\) is set to the initial global model. Thus, at each time step, the new global model is obtained by adding a weighted residual that reflects the update magnitude at that time point.

The per-step coefficient $\alpha_t\in[0,1]$ will be learned via a meta-learning procedure.  
However, even before specifying a particular schedule for $\{\alpha_t\}$, the following theorems reveal intrinsic convergence and stability properties of the residual formulation, thereby justifying the subsequent meta-learning of $\alpha_t$. 

\begin{theorem} \label{thm:convex_hull}
Let $\Delta_t = \bar{\bm w}_g^{(t,r)} - \bm w_g^{(t-1,r)}$, $\beta_x^{(t)} = \alpha_x \prod_{y=x+1}^t (1 - \alpha_y)\quad (1\le x\le t)$, and  $\beta_0^{(t)} = \prod_{y=1}^t (1 - \alpha_y).$
Then the iterates defined by \eqref{eq:residual_update} admit the closed form
\[
\bm w_g^{(t,r)}
= \beta_0^{(t)}\,{\bm w}_g^{(0,r)} + \sum_{x=1}^t \beta_x^{(t)}\,\bar{\bm w}_g^{(x,r)},
\]
with $\beta_x^{(t)}\ge0$ and $\sum_{x=0}^t\beta_x^{(t)}=1$.  
Consequently, $\bm w_g^{(t,r)}$ always lies in the convex hull of $\{\bar{\bm w}_g^{(1,r)},\dots,\bar{\bm w}_g^{(t,r)}\}$, and if $\lim_{t\to\infty}\bar{\bm w}_g^{(t,r)}=w^*$ then $\lim_{t\to\infty}\bm w_g^{(t,r)}=w^*$.
\end{theorem}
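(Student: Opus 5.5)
The closed form follows by induction on $t$, and the limit statement by a Toeplitz-type (regular summation) argument on the weights $\beta_x^{(t)}$. Throughout we fix the round $r$ and drop it from the notation.

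\textbf{Closed form and weight identities.} First rewrite \eqref{eq:residual_update} as the convex combination
\[
\bm w_g^{(t)} = (1-\alpha_t)\,\bm w_g^{(t-1)} + \alpha_t\,\bar{\bm w}_g^{(t)}.
\]
The base case $t=0$ holds with $\beta_0^{(0)}=1$, the empty product. For the inductive step, assume the formula at $t-1$ and substitute it into the line above. This gives $\beta_x^{(t)} = (1-\alpha_t)\beta_x^{(t-1)}$ for $0\le x\le t-1$ and $\beta_t^{(t)}=\alpha_t$, which agree with the stated products. Nonnegativity is immediate from $\alpha_y\in[0,1]$.

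For the sum, the same recursion gives
\[
\sum_{x=0}^t\beta_x^{(t)} = (1-\alpha_t)\sum_{x=0}^{t-1}\beta_x^{(t-1)} + \alpha_t = 1
\]
by induction. Alternatively, one can telescope with $\prod_{y=x+1}^t(1-\alpha_y)-\prod_{y=x}^t(1-\alpha_y)=\alpha_x\prod_{y=x+1}^t(1-\alpha_y)$.

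\textbf{Convex hull.} The weights are nonnegative and sum to one, so $\bm w_g^{(t)}$ is a convex combination of $\{\bm w_g^{(0)},\bar{\bm w}_g^{(1)},\dots,\bar{\bm w}_g^{(t)}\}$. I will note that the anchor $\bm w_g^{(0)}$ must be included in this set. The hull claim restricted to $\{\bar{\bm w}_g^{(x)}\}_{x\ge1}$ holds exactly when $\beta_0^{(t)}=0$, for instance when $\alpha_1=1$. I will state the result in the corrected form and mention that special case.

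\textbf{Limit.} Write
\[
\bm w_g^{(t)}-w^* = \beta_0^{(t)}(\bm w_g^{(0)}-w^*)+\sum_{x=1}^t\beta_x^{(t)}(\bar{\bm w}_g^{(x)}-w^*).
\]
Given $\varepsilon>0$, choose $N$ such that $\|\bar{\bm w}_g^{(x)}-w^*\|<\varepsilon$ for $x>N$. The tail $x>N$ contributes at most $\varepsilon$ because the weights sum to at most one. The head $0\le x\le N$ is bounded by $C\sum_{x=0}^N\beta_x^{(t)}$, where $C$ bounds the finitely many distances involved. For $x\le N$ we have
\[
\beta_x^{(t)}\le \prod_{y=N+1}^t(1-\alpha_y)\le \exp\Bigl(-\sum_{y=N+1}^t\alpha_y\Bigr),
\]
so the head vanishes as $t\to\infty$.

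\textbf{Main obstacle.} The obstacle is that this last step requires $\sum_t\alpha_t=\infty$, which is the Toeplitz regularity condition. Without it the statement fails: with $\alpha_t\equiv0$ the iterate stays at $\bm w_g^{(0)}$ forever. I will therefore add the hypothesis $\sum_t\alpha_t=\infty$, which holds for instance if $\alpha_t\ge\alpha_{\min}>0$, as a meta-learned policy clipped away from zero would ensure. I will present the limit claim under this hypothesis and give the counterexample to show that it is necessary.
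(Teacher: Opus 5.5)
Your closed form, nonnegativity and normalization argument is the same induction the paper uses. The paper starts at $t=1$ rather than your $t=0$, and proves $\sum_{x=0}^t\beta_x^{(t)}=1$ as a separate lemma via $(1-\alpha_t)\cdot 1+\alpha_t$. Your corrected hull statement is exactly what the paper's appendix proves: there the hull is taken over $\{w^{(0)},\bar w^{(1)},\dots,\bar w^{(t)}\}$, which silently corrects the main-text version.

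The paper never argues the limit claim, so your Toeplitz argument adds something rather than reconstructing it. It is correct, and your counterexample $\alpha_t\equiv 0$ shows the hypothesis $\sum_t\alpha_t=\infty$ cannot be dropped.

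One caution about your motivating remark. The paper's softmax parameterization forces $\sum_{t=1}^T\alpha_t=1$, so no floor $\alpha_t\ge\alpha_{\min}>0$ can hold as $T$ grows. For example, uniform scores give $\beta_0^{(T)}=(1-1/T)^T\to e^{-1}$, so the anchor $w^{(0)}$ keeps a fixed share of the weight. The limit statement therefore does not apply to the paper's own coefficient policy, even though your corrected version is right for the general recursion.
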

This result guarantees that, irrespective of how the meta-learner chooses $\alpha_t$, the global model retains a smooth memory of all past snapshots by forming a convex combination of $\{\bar{\bm w}_g^{(1,r)},\dots,\bar{\bm w}_g^{(t,r)}\}$, where the pre-trained initial model $\bm w_g^{(0,r)}$ serves as an anchoring vertex in the convex hull, thus imbuing the aggregation process with domain-specific priors that are smoothly incorporated and subsequently attenuated according to the learned $\{\alpha_t\}$. When generating medical reports from a patient’s longitudinal examination records—our aggregation automatically balances contributions from each time point, ensuring that early biomarkers and recent observations are both reflected in a coherent, temporally aware global model. A complete analysis is deferred to the Appendix A.1. 

\begin{theorem}\label{thm:bounded_update}
Assume that for all $t$, $\|\Delta_t\|\le G$ for some constant $G>0$, and that $0\le\alpha_t\le1$.  Then the update magnitude at each time step satisfies
\[
\bigl\|\bm w_g^{(t,r)} - \bm w_g^{(t-1,r)}\bigr\|
= \alpha_t\,\|\Delta_t\|
\;\le\;\alpha_t\,G.
\]
In particular, if $\alpha_t\to0$ as $t\to\infty$, then the per-step update norm vanishes, ensuring smoother convergence and preventing excessive oscillations.
\end{theorem}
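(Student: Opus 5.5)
The identity follows directly from the recursion \eqref{eq:residual_update}: subtracting $\bm w_g^{(t-1,r)}$ from both sides gives
\[
\bm w_g^{(t,r)} - \bm w_g^{(t-1,r)} = \alpha_t\bigl(\bar{\bm w}_g^{(t,r)} - \bm w_g^{(t-1,r)}\bigr) = \alpha_t\,\Delta_t,
\]
with $\Delta_t$ defined as in Theorem~\ref{thm:convex_hull}. Taking norms and using homogeneity gives $\|\bm w_g^{(t,r)} - \bm w_g^{(t-1,r)}\| = |\alpha_t|\,\|\Delta_t\|$. Since $0\le\alpha_t\le1$, we have $|\alpha_t|=\alpha_t$, which yields the stated equality. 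Multiplying the assumption $\|\Delta_t\|\le G$ by the nonnegative scalar $\alpha_t$ then gives $\alpha_t\|\Delta_t\|\le \alpha_t G$.

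For the vanishing claim, I would first note that $0\le \|\bm w_g^{(t,r)} - \bm w_g^{(t-1,r)}\|\le \alpha_t G$. If $\alpha_t\to0$, the upper bound $\alpha_t G\to 0$ because $G$ is a fixed constant, so the per-step update norm tends to zero by squeezing.

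The only subtle point is interpretive. If $\bar{\bm w}_g^{(t,r)}$ is itself produced by local training from $\bm w_g^{(t-1,r)}$, then the uniform bound $G$ must be read as an assumption on every such residual, not as a consequence of anything proved earlier; the statement indeed imposes it as an assumption. I would also add a remark that vanishing step norms do not by themselves imply convergence of the iterates, since $\sum_t\alpha_t$ may diverge. The ``smoother convergence'' phrasing should therefore be read only as the bounded-oscillation statement. If convergence is wanted, summability $\sum_t \alpha_t<\infty$ would make the sequence Cauchy, because $\sum_t\|\bm w_g^{(t,r)}-\bm w_g^{(t-1,r)}\|\le G\sum_t\alpha_t<\infty$. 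This extra remark is the only step requiring care; the rest is immediate.
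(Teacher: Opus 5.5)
Your proof is correct and follows the paper's argument: the paper also subtracts $\bm w_g^{(t-1,r)}$ to write the increment as $\alpha_t\Delta_t$, then uses positive homogeneity with $\alpha_t\ge 0$ and the uniform bound $G$, and its explicit $\varepsilon$--$T_\varepsilon$ argument for the vanishing claim is your squeeze step written out. Your extra remark is also correct and goes beyond the paper: $\alpha_t\to 0$ alone does not make the iterates converge, whereas $\sum_t\alpha_t<\infty$ would.
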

This boundedness property underpins the trade-off between adaptivity and stability: large $\alpha_t$ allows rapid response to new data, whereas small $\alpha_t$ dampens noise and enforces convergence.  Consequently, meta-learning $\{\alpha_t\}$ enables an automated balance between optimization speed and robust stability in non-stationary federated settings. A complete analysis is deferred to the Appendix A.2.
\subsubsection{Meta-Learned Coefficients}
\begin{algorithm}[t]
\caption{Temporal Residual Aggregation}
\label{alg:global_temporal_aggregation}
\begin{algorithmic}[1]
\Require Initial global model $\bm{w}_g^{(0,0)}$, meta‐parameters $\psi$, total rounds $R$, time points $T$
\For{$r = 1$ to $R$}
  \State $\bm{w}_g^{(0,r)} \gets \bm{w}_g^{(T,\,r-1)}$
  \For{$t = 1$ to $T$}
    \For{each client $k=1,\dots,K$ in parallel}
      \State $\bm{w}_c^{t, k, r} \gets \mathtt{ClientUpdates}(\bm{w}_g^{(t-1,r)},D_k)$
    \EndFor
    \Statex\quad$\bar{\bm{w}}_g^{(t,r)} \gets \sum_{k=1}^K \frac{1}{n_k}\,\bm{w}_c^{t, k, r}$
    \Statex\quad$\bm{w}_g^{(t,r)} \gets \bm{w}_g^{(t-1,r)} + \alpha_t(\psi)\,(\bar{\bm{w}}_g^{(t,r)} - \bm{w}_g^{(t-1,r)})$
  \EndFor
  \State $\psi \gets \psi - \eta\,\nabla_\psi\,\mathcal{L}_{\mathrm{val}}(\bm{w}_g^{(T,r)}(\psi))$
\EndFor
\end{algorithmic}
\end{algorithm}
Building on the convex‐combination and bounded‐update properties established in Theorems~\ref{thm:convex_hull} and~\ref{thm:bounded_update}, we introduce a data‐driven procedure to calibrate the temporal weights $\{\alpha_t\}$ so as to optimize both responsiveness to recent updates and overall convergence stability. Specifically, we parameterize $\alpha_t$ as
\begin{equation}
    u_t = g\bigl(e(t);\psi\bigr), 
\qquad
[\alpha_1,\dots,\alpha_T] = \mathrm{Softmax}([u_1,\dots,u_T]),
\end{equation}
where $g(\cdot;\psi)$ denotes a lightweight multilayer perceptron mapping time embeddings $e(t)$ to unnormalized scores. This construction ensures that $\alpha_t\in(0,1)$ and that the resulting residual aggregation remains within the convex hull of past snapshots, thereby preserving the theoretical guarantees of smooth convergence.

However, selecting $\psi$ directly to minimize training loss can lead to overfitting temporal fluctuations; therefore, we adopt a bilevel optimization strategy. In the inner loop, temporal aggregation across $t=1,\dots,T$ yields the aggregated model $\bm{w}_g^{(T,r)}(\psi)$ according to \eqref{eq:residual_update}. In the outer loop, we update the meta‐parameters by minimizing a held‐out validation loss:
\begin{equation}
    \psi \leftarrow \psi - \eta\,\nabla_\psi\,\mathcal{L}_{\mathrm{val}}\bigl(\bm{w}_g^{(T,r)},\psi\bigr),
\end{equation}
where $\mathcal{L}_{\mathrm{val}}$ measures generalization performance on longitudinal medical report generation tasks. The detailed derivation of the hypergradient computation is provided in the Appendix A.3. By leveraging hypergradient descent, this meta‐learning scheme systematically refines $\{\alpha_t\}$ to balance optimization speed against robust stability across temporal domains. We summarize this process in Algorithm~\ref{alg:global_temporal_aggregation}.

Moreover, the softmax parameterization coupled with hypergradient-based adaptation circumvents the need for manual scheduling of residual coefficients, thereby automating the trade-off between rapid adaptation to non-stationary data and dampening of noisy updates. This approach ensures that large coefficients are assigned to time points with significant distributional shifts, whereas minor or spurious fluctuations are attenuated, leading to improved convergence properties and better generalization under temporal heterogeneity. We note that, under standard smoothness and Lipschitz continuity assumptions on the meta‐learner, the bilevel optimization procedure admits convergence guarantees; a complete analysis is deferred to the Appendix A.4.

\section{Experiments} \label{section:exp}
\begin{table*}[t]
\centering
\small
\caption{Comparison of our proposed method (FedTAR) with standard FL baselines on the chest CT report generation task. Evaluation metrics include BLEU-$n$ ($n$=1–4), ROUGE-L, and CIDEr, which collectively reflect linguistic precision, recall, and content relevance. FedTAR achieves consistent improvements, demonstrating the effectiveness of demographics- and temporally-aware adaptation.}
\label{tab:main_results}
\begin{tabular}{l|cccccc}
\toprule
         & BLEU-1 & BLEU-2 & BLEU-3 & BLEU-4 & ROUGE-L & CIDEr \\ \midrule
FedAvg~\cite{FedAvg}   & 38.40 & 24.68 & 16.16 & 10.98 & 28.54 & 31.70 \\
FedProx~\cite{FedProx}  & 38.32 & 24.48 & 15.95 & 11.00 & 28.58 & 31.62 \\
SCAFFOLD~\cite{karimireddy2020scaffold} & 35.40 & 22.38 & 14.80 & 10.10 & 27.12 & 24.82 \\
FedAdam~\cite{FedAdam}  & 38.26 & 24.54 & 16.05 & 10.93 & 28.61 & 30.62 \\
FedYogi~\cite{FedAdam}  & 35.39 & 22.37 & 13.72 & 10.46 & 27.91 & 24.79 \\
DRFA~\cite{deng2020distributionally}     & 36.80 & 24.02 & 16.63 & 11.60 & 28.75 & 29.51 \\ \midrule
FedTAR (ours)       & \textbf{40.08}  & \textbf{25.94} & \textbf{17.80}  &\textbf{12.40} & \textbf{29.54} & \textbf{42.80} \\ \bottomrule
\end{tabular}
\end{table*}
\begin{table*}[t]
\centering
\small
\caption{Ablation study evaluating the contributions of the Gaussian Mixture Model (GMM)-based demographics embedding and the temporal residual aggregation module. Removing either component leads to consistent performance degradation across all metrics, confirming their necessity for capturing patient-specific heterogeneity and longitudinal dynamics.}
\label{tab:ablation_results}
\begin{tabular}{l|cccccc}
\toprule
         & BLEU-1 & BLEU-2 & BLEU-3 & BLEU-4 & ROUGE-L & CIDEr \\ \midrule
FedTAR w/o GMM   & \textbf{40.38} & 25.08 & 17.59 & 12.07  & 28.65 & 41.56 \\
FedTAR w/o temporal  & 38.83 & 25.82 & 17.28 & 11.21 & 29.50 & 41.88 \\\midrule
FedTAR    &  40.08  & \textbf{25.94} &\textbf{17.80}  &\textbf{12.40} & \textbf{29.54} & \textbf{42.80} \\ \bottomrule
\end{tabular}
\end{table*}
\begin{table*}[t]
\centering
\small
\caption{Quantitative evaluation on the MIMIC-CXR dataset. Higher values indicate better performance.}
\label{tab:mimic}
\begin{tabular}{lccccccccc}
\toprule
 & CE-P & CE-R & CE-F1 & BLEU-1 & BLEU-2 & BLEU-3 & BLEU-4 & ROUGE-L & CIDEr \\
\midrule
FedAvg~\cite{FedAvg}   & 68.58 & 22.63 & 28.43 & 29.16 & 19.49 & 14.36 & 11.00 & 35.77 & 37.54 \\
FedProx~\cite{FedProx}   & \textbf{69.83} & 24.59 & 30.26 & 27.16 & 17.80 & 12.85 & 9.70 & 34.11 & 37.38 \\
SCAFFOLD~\cite{karimireddy2020scaffold} & 44.58 & 22.6 & 21.47 & 28.00 & 17.23 & 14.21 & 9.86 & 38.05 & 99.11 \\
FedAdam~\cite{FedAdam}  & 58.74 & 25.95 & 31.80 & \textbf{39.17} & \textbf{28.56} & 22.42 & 18.39 & 43.08 & 97.07 \\
FedYogi~\cite{FedAdam}  & 52.02 & 23.36 & 29.90 & 35.55 & 22.57 & 18.94 & 13.84 & 37.03 & 94.11 \\
DRFA~\cite{deng2020distributionally}     & 47.67 & 26.66 & 30.30 & 36.81 & 26.94 & 21.30 & 17.64 & 42.71 & 108.10 \\\midrule
FedTAR (ours)     & 52.73 & \textbf{28.82} & \textbf{33.21} & 36.59 & 27.83 & \textbf{22.60} & \textbf{19.08} & \textbf{44.42} & \textbf{120.58} \\
\bottomrule
\end{tabular}
\end{table*}
\begin{table*}[t]
\centering
\small
\caption{Comparison with centralized generation models on the J-MID dataset. Higher values are better.}
\label{tab:com_mg_model}
\begin{tabular}{lccccccccc}
\toprule
& CE-P & CE-R & CE-F1 & BLEU-1 & BLEU-2 & BLEU-3 & BLEU-4 & ROUGE-L & CIDEr \\
\midrule
CT2RepLong   & 13.37 & \textbf{13.12} & 13.24 & \textbf{47.20} & \textbf{28.46} & \textbf{24.26} & 10.57 & 35.63 & 25.68 \\
FedTAR  (ours)      & \textbf{17.65} & 12.76 & \textbf{14.47} & 40.08 & 25.94 & 20.54 & \textbf{12.14} & \textbf{29.54} & \textbf{42.80} \\
\bottomrule
\end{tabular}
\end{table*}
\begin{table*}[t]
\centering
\caption{Paired significance analysis between FedTAR and FedAdam on MIMIC-CXR. Significant results: 95\% CI excludes 0 and \(p<0.05\).}
\label{tab:significance_compact}
\resizebox{\linewidth}{!}{
\begin{tabular}{lccccccccc}
\toprule
Statistic & CE-P & \textbf{CE-R} & \textbf{CE-F1} & BLEU-1 & \textbf{BLEU-2} & \textbf{BLEU-3} & \textbf{BLEU-4} & \textbf{CIDEr} & \textbf{ROUGE-L} \\
\midrule
95\% CI & (-0.02,0.08) & (0.01,0.10) & (0.00,0.09) & (-0.00,0.02) & (0.01,0.03) & (0.01,0.03) & (0.01,0.03) & (0.17,0.57) & (0.01,0.03) \\
$p$-value & 0.25 & 0.02 & 0.04 & 0.09 & <0.001 & <0.001 & <0.001 & <0.001 & <0.001 \\
Win-rate (\%) & 88.98 & 98.84 & 97.76 & 95.71 & 100.00 & 100.00 & 100.00 & 100.00 & 100.00 \\
Significant & No & Yes & Yes & No & Yes & Yes & Yes & Yes & Yes \\
\bottomrule
\end{tabular}
}
\end{table*}
The objectives of our experimental study are to demonstrate that our temporally-aware federated adaptation framework can effectively capture longitudinal disease progression across five distinct clinical sites, to show that patient demographics–conditioned low-rank adaptation via Gaussian Mixture Modeling enhances the quality of generated chest CT reports compared to conventional FL baselines, and to verify that meta-learned temporal residual aggregation improves convergence stability and generalization performance under heterogeneous institutional data distributions.
\paragraph{Dataset Composition}
We conducted all experiments on a proprietary, longitudinal chest CT dataset newly assembled from five independent medical institutions, comprising 36,674, 5,967, 6,631, 6,688, and 1,439 patient cases, respectively. Each institution contributed de-identified CT volumes, corresponding radiological narratives authored by board-certified clinicians, and minimal patient demographics—namely a pseudonymized identifier, age, and sex. Crucially, every patient in our cohort underwent exactly five sequential CT examinations, thereby capturing fine-grained temporal progression of pulmonary findings. We also conducted evaluations on the public MIMIC-CXR dataset~\cite{johnson2019mimic}, which contains 370,000 medical images. 
\paragraph{Federated Client Partitioning}
To simulate a realistic FL environment under non-IID conditions, we partitioned the entire dataset by source institution into five clients. Each client holds all five time-ordered examinations for its local patient subpopulation, which enables us to evaluate both the convergence behavior and generalization performance of our patient-adaptive LoRA embeddings and meta-learned temporal aggregation under heterogeneous temporal dynamics. Moreover, by preserving each client’s unique distribution of disease trajectories, our setup rigorously tests the ability of Gaussian Mixture–conditioned embedding and hypernetwork-driven adaptation to reconcile local variability with global model consistency. For MIMIC-CXR dataset, we partitioned the data into four clients. Since patient-level identifiers are unavailable, we assigned ``subject id'' as input.
\paragraph{Ethics and Privacy}
All data collection and processing procedures received approval from the respective institutional review boards, and no direct or quasi-identifiers are retained at any stage. However, due to strict patient-privacy regulations, the dataset itself is not publicly released. This configuration provides a robust testbed for assessing optimization stability and personalized generalization in temporally aware federated medical report generation.
\subsection{Implementation Details}
Our method is implemented in PyTorch 2.4.1 and executed on NVIDIA RTX A6000 GPUs. The image encoder is a Convolutional vision Transformer warmed up with an ImageNet-21K checkpoint, and the text decoder is DistilGPT2~\cite{sanh2019distilbert} pretrained on clinical corpora. We attach LoRA adapters of rank $r=4$ with $\alpha=128$ to every transformer layer and optimize all parameters using AdamW with a base learning rate of $10^{-5}$ for model and adapter parameters and $10^{-4}$ for meta-learned temporal coefficients. We implement patient‐profile clustering by fitting a diagonal‐covariance GMM with sixteen components using scikit‐learn’s \texttt{GaussianMixture} class, configured with a small regularization on the diagonal entries to prevent singularities and a strict convergence tolerance for EM iterations. Each patient is represented by a normalized three‐dimensional vector—comprising a SHA‐256–based hashed identifier scaled to \([0,1]\), age divided by 100, and a binary encoding of sex. After fitting the GMM on the stacked matrix of these profile vectors, we obtain for each patient a soft assignment vector of length sixteen via \texttt{predict\_proba}. These mixture weights are then mapped into the model’s embedding space through a learnable linear projection layer, where only the projection’s weights and bias are updated during downstream federated training.

The temporal residual aggregation hypernetwork is a lightweight, fully differentiable module that embeds each integer time index via a learnable table, feeds the embeddings into a single-hidden-layer MLP with ReLU to produce unnormalized scores over all time steps, and applies a softmax to obtain aggregation coefficients that sum to one for the residual update.
\subsection{Results}
\paragraph{Comparison with Baselines}
We report BLEU-1 through BLEU-4~\cite{papineni2002bleu}, ROUGE-L~\cite{lin2004rouge}, and CIDEr~\cite{vedantam2015CIDEr} for evaluation~\cite{sloan2024automated}. We compare our proposed method (FedTAR) against six representative FL baselines: FedAvg~\cite{FedAvg}, FedProx~\cite{FedProx}, SCAFFOLD~\cite{karimireddy2020scaffold}, FedAdam~\cite{FedAdam}, FedYogi~\cite{FedAdam}, and DRFA~\cite{deng2020distributionally}. For all six federated baselines, we follow the conventional centralized-training protocol that pools data from all time points into a single dataset. The average scores across institutions are summarized in Table~\ref{tab:main_results}, and detailed per-site results are reported in Appendix~B.1. FedTAR consistently surpasses all baselines across BLEU, ROUGE-L, and CIDEr metrics, confirming its advantage in both linguistic precision and semantic consistency. These improvements stem from the framework’s ability to condition model updates on patient demographics and temporal progression, allowing the global model to maintain coherence across visits. In particular, FedTAR shows higher early n-gram accuracy and stronger structural alignment, indicating its capacity to capture subtle longitudinal evolution and site-specific variations. Qualitative report samples illustrating these gains are presented in Appendix~B.2.
\begin{figure}[t]
\centering
  \begin{subfigure}[t]{0.7\linewidth}
    \centering
    \includegraphics[width=\linewidth]{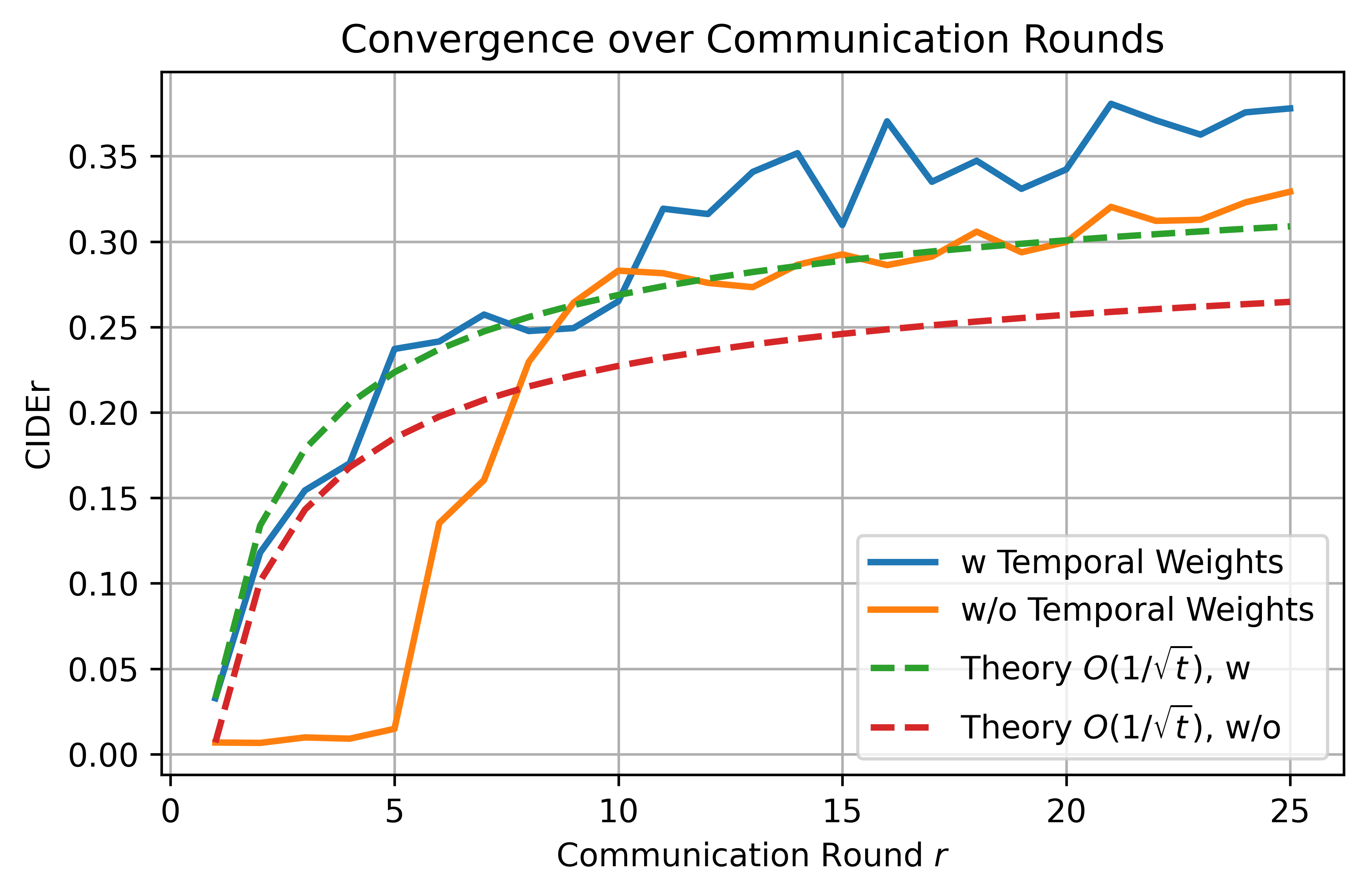}
    \caption{Empirical and theoretical convergence of CIDEr scores over communication rounds.}
    \label{fig:convergence}
  \end{subfigure}
  \hfill
    \begin{subfigure}[t]{0.7\linewidth}
    \centering
    \includegraphics[width=\linewidth]{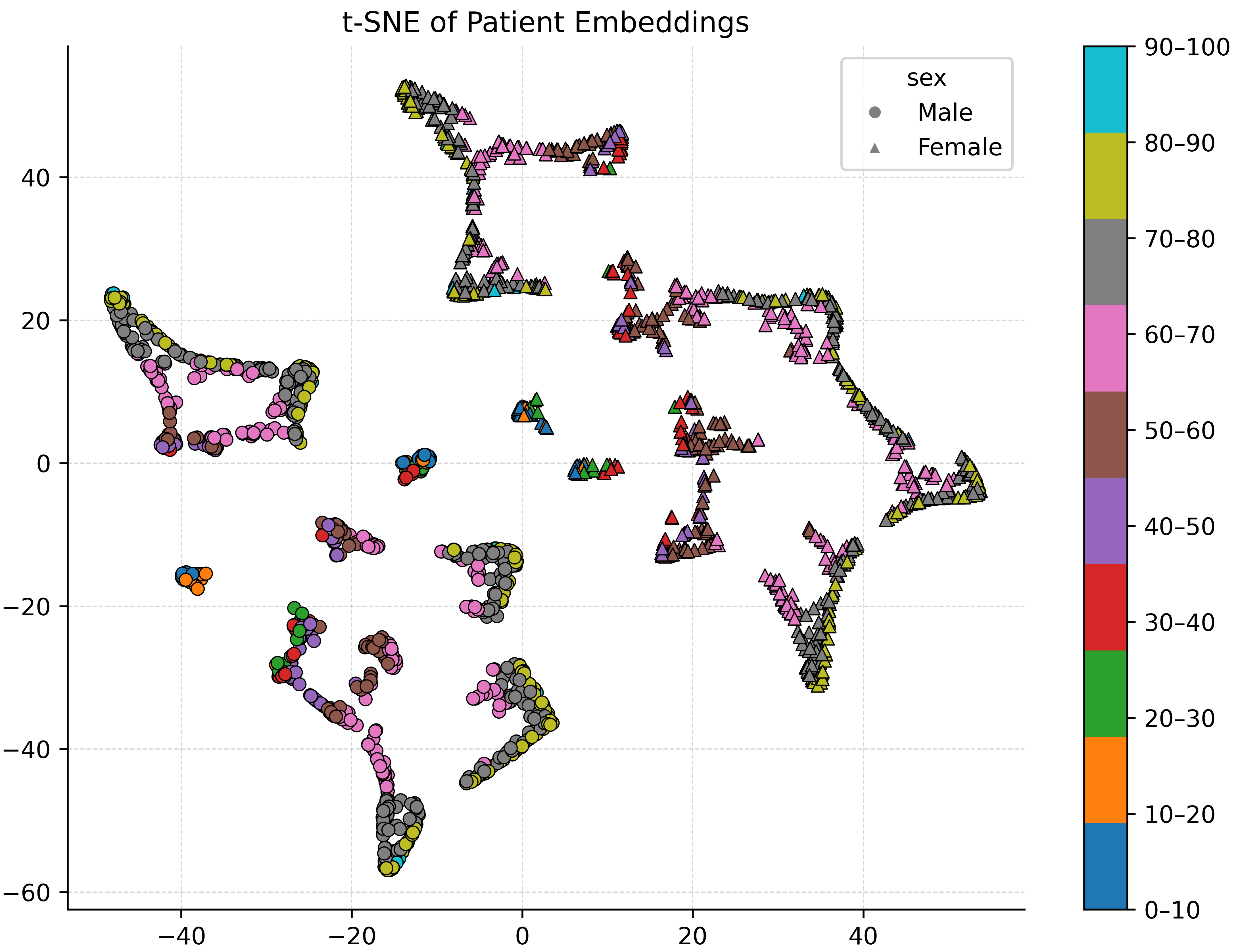}
    \caption{Visualization of patient embeddings $\bm{\phi}_p$.}
    \label{fig:patient_embeddings}
  \end{subfigure}
  \caption{Overall caption describing both subfigures.}
\end{figure}
\paragraph{Ablation Study} As shown in Table~\ref{tab:ablation_results}, we perform ablation experiments on the two core components of FedTAR: the GMM-based demographics embedding and the meta-learned temporal aggregation. Removing the GMM module markedly reduces BLEU and CIDEr scores, confirming that patient-conditioned embeddings are essential for capturing demographic and inter-patient variability. Disabling the temporal hypernetwork similarly lowers ROUGE-L and recall-oriented metrics, demonstrating that adaptive temporal weighting is crucial for modeling longitudinal disease progression and mitigating inter-site drift. These complementary effects verify that both personalization and temporal adaptation are indispensable to the overall performance. Extended hyperparameter sensitivity to the LoRA rank~$r$ and GMM component number~$k$ is reported in Appendix~B.3.
\paragraph{Computational and Communication Efficiency.}
We evaluate the overhead introduced by FedTAR. 
Adding LoRA adapters and the GMM-based demographics encoder increases per-layer FLOPs by less than 0.001\% on average, and GMM fitting is performed once per client, adding $<0.5\%$ total runtime. 
During communication, only lightweight LoRA parameters ($A,B$) and their gradients are exchanged—no patient-specific tensors or demographics leave the client—reducing payload size by over $10\times$ compared with full model synchronization. 
Overall, the proposed modules deliver substantial performance gains with negligible additional computational or communication cost.
\paragraph{Cross-dataset Evaluation}
We further validated the framework on the MIMIC-CXR dataset using the same settings as in the main study. Clinical Efficacy (CE) metrics~\cite{chexbert}, which measure clinical consistency between generated and reference reports, were employed alongside standard text metrics. As shown in Table~\ref{tab:mimic}, our method consistently outperforms prior approaches, achieving relative gains of +4.4\% in macro-F1, +8.1\% in macro-Recall, and +11.5\% in CIDEr, along with improvements in ROUGE-L and BLEU-4. These results demonstrate stronger semantic fidelity and clinical consistency in generated reports.
\paragraph{Comparison with Centralized Report Generation}
To evaluate cross-site generalization, we compared our federated framework with centralized report generation models CT2RepLong on the J-MID dataset. As shown in Table~\ref{tab:com_mg_model}, our method achieves higher macro-F1, CE precision, and a 66.7\% gain in CIDEr, indicating stronger semantic alignment and phase-level coherence. Despite a slight recall drop (2.7\%), the overall results demonstrate that the proposed temporal residual aggregation effectively preserves privacy while maintaining or surpassing centralized performance across key clinical and semantic metrics.
\paragraph{Statistical Significance Analysis}
We validate that the observed improvements are statistically reliable by comparing FedTAR with the strongest baseline FedAdam on the MIMIC-CXR dataset.  
For each study, we computed the metric difference and drew 5000 bootstrap replicates to estimate 95\% confidence intervals (CI).  
Paired permutation tests provided \(p\)-values, and the win-rate indicates the percentage of test studies where FedTAR outperforms FedAdam.  
A metric is significant when the 95\% CI excludes 0 and \(p < 0.05\).
\paragraph{Convergence Analysis}
Figure~\ref{fig:convergence} plots the empirical CIDEr trajectories of temporal residual aggregation, its unweighted variant, and the $O(\frac{1}{\sqrt{t}})$ reference.
FedTAR increases rapidly in the early rounds and saturates before the baseline shows noticeable gains.
Across training, the blue curve stays above the theoretical reference with reduced variance, indicating faster and more stable convergence.
This supports our design: demographics-conditioned LoRA adapters align updates within patient subspaces, while the temporal hypernetwork prioritizes recent rounds without amplifying noise, thereby accelerating learning and mitigating heterogeneity-induced oscillations.
\paragraph{Impact of GMM}
The GMM-based demographics embedding allows the model to capture fine-grained patient heterogeneity via soft cluster assignments, producing \(\bm{\phi}_p\) that reflect subtle demographic and disease progression differences.
This yields higher CIDEr and ROUGE scores.
Figure~\ref{fig:patient_embeddings} shows a t-SNE projection of patient embeddings, colored by age and shaped by sex, exhibiting clear demographic structure and smooth age ordering.
Cluster densities also correlate with the five clinical sites, indicating that \(\bm{\phi}_p\) implicitly encodes both demographic and institutional variations without explicit supervision, which is beneficial for personalized federated adaptation.
\paragraph{Impact of Temporal Residual Aggregation Strategy} The temporal aggregation strategy via meta-learning dynamically balances contributions from different temporal checkpoints. By assigning adaptive weights to client updates from different time points, our model effectively mitigates temporal concept drift and selectively emphasizes critical clinical events over routine findings. 
\paragraph{Impact of Residual-based Update} Our residual-based update mechanism inherently stabilizes training under heterogeneous federated conditions. By ensuring each global update remains within a convex combination of previous states, the model prevents large disruptive updates, thereby maintaining stable convergence behavior and avoiding performance degradation often observed in traditional federated algorithms. This stability is empirically reflected by lower variance in validation loss across communication rounds, demonstrating robustness and rapid convergence.
\section{Conclusion and Discussion} \label{section:dis}
In this work, we introduced Federated Temporal Adaptation (FTA) as a new federated learning setting that explicitly models both cross-client heterogeneity and temporal evolution in longitudinal medical data. To address this setting, we proposed FedTAR, a parameter-efficient multimodal framework that couples demographics-aware LoRA personalization with meta-learned temporal residual aggregation to handle non-stationary patient trajectories. Experiments on large-scale, multi-institutional datasets show that FedTAR consistently outperforms strong FL baselines in terms of linguistic accuracy, temporal consistency, and convergence stability, while preserving data privacy. Beyond empirical gains, our analysis provides convergence guarantees under mild assumptions, suggesting a principled foundation for extending FTA to irregular visit patterns and more challenging heterogeneous regimes.


{
    \small

\begin{thebibliography}{40}
\providecommand{\natexlab}[1]{#1}
\providecommand{\url}[1]{\texttt{#1}}
\expandafter\ifx\csname urlstyle\endcsname\relax
  \providecommand{\doi}[1]{doi: #1}\else
  \providecommand{\doi}{doi: \begingroup \urlstyle{rm}\Url}\fi

\bibitem[Adnan et~al.(2022)Adnan, Kalra, Cresswell, Taylor, and Tizhoosh]{adnan2022federated}
Mohammed Adnan, Shivam Kalra, Jesse~C Cresswell, Graham~W Taylor, and Hamid~R Tizhoosh.
\newblock Federated learning and differential privacy for medical image analysis.
\newblock \emph{Scientific Reports}, 12\penalty0 (1):\penalty0 1953, 2022.

\bibitem[Atteia et~al.(2023)Atteia, El-kenawy, Samee, Jamjoom, Ibrahim, Abdelhamid, Azar, Khodadadi, Ghanem, and Shams]{atteia2023adaptive}
Ghada Atteia, El-Sayed~M El-kenawy, Nagwan~Abdel Samee, Mona~M Jamjoom, Abdelhameed Ibrahim, Abdelaziz~A Abdelhamid, Ahmad~Taher Azar, Nima Khodadadi, Reham~A Ghanem, and Mahmoud~Y Shams.
\newblock Adaptive dynamic dipper throated optimization for feature selection in medical data.
\newblock \emph{Computers, Materials \& Continua}, 75\penalty0 (1):\penalty0 1883--1900, 2023.

\bibitem[Brisimi et~al.(2018)Brisimi, Chen, Mela, Olshevsky, Paschalidis, and Shi]{brisimi2018federated}
Theodora~S Brisimi, Ruidi Chen, Theofanie Mela, Alex Olshevsky, Ioannis~Ch Paschalidis, and Wei Shi.
\newblock Federated learning of predictive models from federated electronic health records.
\newblock \emph{International Journal of Medical Informatics}, 112:\penalty0 59--67, 2018.

\bibitem[Chen and Pan(2024{\natexlab{a}})]{chen2024medical}
Jieying Chen and Rong Pan.
\newblock Medical report generation based on multimodal federated learning.
\newblock \emph{Computerized Medical Imaging and Graphics}, 113:\penalty0 102342, 2024{\natexlab{a}}.

\bibitem[Chen and Pan(2024{\natexlab{b}})]{chen2024federated}
X. Chen and Y. Pan.
\newblock Federated multimodal learning for medical report generation.
\newblock In \emph{Proceedings of International Conference on Medical Image Computing and Computer Assisted Intervention}, 2024{\natexlab{b}}.

\bibitem[Chen et~al.(2022)]{chen2022timefed}
Z. Chen et~al.
\newblock Time-weighted asynchronous federated learning for dynamic environments.
\newblock \emph{Sensors}, 22\penalty0 (4):\penalty0 1672, 2022.

\bibitem[Chiruvella et~al.(2021)Chiruvella, Guddati, et~al.]{chiruvella2021ethical}
Varsha Chiruvella, Achuta~Kumar Guddati, et~al.
\newblock Ethical issues in patient data ownership.
\newblock \emph{Interactive Journal of Medical Research}, 10\penalty0 (2):\penalty0 e22269, 2021.

\bibitem[Deng et~al.(2020)Deng, Kamani, and Mahdavi]{deng2020distributionally}
Yuyang Deng, Mohammad~Mahdi Kamani, and Mehrdad Mahdavi.
\newblock Distributionally robust federated averaging.
\newblock \emph{Proceedings of Advances in Neural Information Processing Systems}, 33:\penalty0 15111--15122, 2020.

\bibitem[et~al.(2019)]{johnson2019mimic}
Johnson Alistair~EW et al.
\newblock Mimic-cxr-jpg, a large publicly available database of labeled chest radiographs.
\newblock \emph{arXiv preprint arXiv:1901.07042}, 2019.

\bibitem[Fujita et~al.(2024)Fujita, Fushimi, Ito, Matsui, Tatsugami, Fujioka, Ueda, Fujima, Hirata, Tsuboyama, et~al.]{fujita2024advancing}
Shohei Fujita, Yasutaka Fushimi, Rintaro Ito, Yusuke Matsui, Fuminari Tatsugami, Tomoyuki Fujioka, Daiju Ueda, Noriyuki Fujima, Kenji Hirata, Takahiro Tsuboyama, et~al.
\newblock Advancing clinical mri exams with artificial intelligence: Japan’s contributions and future prospects.
\newblock \emph{Japanese Journal of Radiology}, pages 1--10, 2024.

\bibitem[Gama et~al.(2014)Gama, {\v{Z}}liobait{\.e}, Bifet, Pechenizkiy, and Bouchachia]{gama2014survey}
Jo{\~a}o Gama, Indr{\.e} {\v{Z}}liobait{\.e}, Albert Bifet, Mykola Pechenizkiy, and Abdelhamid Bouchachia.
\newblock A survey on concept drift adaptation.
\newblock \emph{ACM Computing Surveys}, 46\penalty0 (4):\penalty0 1--37, 2014.

\bibitem[Gerig et~al.(2016)Gerig, Fishbaugh, and Sadeghi]{gerig2016longitudinal}
Guido Gerig, James Fishbaugh, and Neda Sadeghi.
\newblock Longitudinal modeling of appearance and shape and its potential for clinical use, 2016.

\bibitem[Ghari and Shen(2024)]{ghari2024streamfed}
M. Ghari and S. Shen.
\newblock Streamfed: Online personalized federated learning with global model ensembles.
\newblock In \emph{Proceedings of Advances in Neural Information Processing Systems}, 2024.

\bibitem[Guan et~al.(2024)Guan, Yap, Bozoki, and Liu]{guan2024federated}
Hao Guan, Pew-Thian Yap, Andrea Bozoki, and Mingxia Liu.
\newblock Federated learning for medical image analysis: A survey.
\newblock \emph{Pattern Recognition}, page 110424, 2024.

\bibitem[Hamamci et~al.(2024)Hamamci, Er, and Menze]{hamamci2024ct2rep}
Ibrahim~Ethem Hamamci, Sezgin Er, and Bjoern Menze.
\newblock Ct2rep: Automated radiology report generation for 3d medical imaging.
\newblock In \emph{Proceedings of International Conference on Medical Image Computing and Computer-Assisted Intervention}, pages 476--486, 2024.

\bibitem[Hosny et~al.(2018)Hosny, Parmar, Quackenbush, Schwartz, and Aerts]{hosny2018artificial}
Ahmed Hosny, Chintan Parmar, John Quackenbush, Lawrence~H Schwartz, and Hugo~JWL Aerts.
\newblock Artificial intelligence in radiology.
\newblock \emph{Nature Reviews Cancer}, 18\penalty0 (8):\penalty0 500--510, 2018.

\bibitem[Jin et~al.(2021)Jin, Yu, Ke, Ding, Yi, Jiang, Duan, Tang, Chang, Wu, et~al.]{jin2021predicting}
Cheng Jin, Heng Yu, Jia Ke, Peirong Ding, Yongju Yi, Xiaofeng Jiang, Xin Duan, Jinghua Tang, Daniel~T Chang, Xiaojian Wu, et~al.
\newblock Predicting treatment response from longitudinal images using multi-task deep learning.
\newblock \emph{Nature Communications}, 12\penalty0 (1):\penalty0 1851, 2021.

\bibitem[Karimireddy et~al.(2020)Karimireddy, Kale, Mohri, Reddi, Stich, and Suresh]{karimireddy2020scaffold}
Sai~Praneeth Karimireddy, Satyen Kale, Mehryar Mohri, Sashank Reddi, Sebastian Stich, and Ananda~Theertha Suresh.
\newblock Scaffold: Stochastic controlled averaging for federated learning.
\newblock In \emph{Proceedings of International Conference on Machine Learning}, pages 5132--5143, 2020.

\bibitem[Kisilev et~al.(2015)Kisilev, Walach, Barkan, Ophir, Alpert, and Hashoul]{kisilev2015medical}
Pavel Kisilev, Eugene Walach, Ella Barkan, Boaz Ophir, Sharon Alpert, and Sharbell~Y Hashoul.
\newblock From medical image to automatic medical report generation.
\newblock \emph{IBM Journal of Research and Development}, 59\penalty0 (2/3):\penalty0 2--1, 2015.

\bibitem[Li et~al.(2021)Li, Wen, Wu, Hu, Wang, Li, Liu, and He]{li2021survey}
Qinbin Li, Zeyi Wen, Zhaomin Wu, Sixu Hu, Naibo Wang, Yuan Li, Xu Liu, and Bingsheng He.
\newblock A survey on federated learning systems: Vision, hype and reality for data privacy and protection.
\newblock \emph{IEEE Transactions on Knowledge and Data Engineering}, 35\penalty0 (4):\penalty0 3347--3366, 2021.

\bibitem[Li et~al.(2020)Li, Sahu, Zaheer, Sanjabi, Talwalkar, and Smith]{FedProx}
Tian Li, Anit~Kumar Sahu, Manzil Zaheer, Maziar Sanjabi, Ameet Talwalkar, and Virginia Smith.
\newblock Federated optimization in heterogeneous networks.
\newblock \emph{Proceedings of Machine Learning and Systems}, 2:\penalty0 429--450, 2020.

\bibitem[Lin(2004)]{lin2004rouge}
Chin-Yew Lin.
\newblock Rouge: A package for automatic evaluation of summaries.
\newblock In \emph{Text Summarization Branches Out}, pages 74--81, 2004.

\bibitem[Lu et~al.(2024)Lu, Pan, Dai, Si, and Zhang]{lu2024federated}
Zili Lu, Heng Pan, Yueyue Dai, Xueming Si, and Yan Zhang.
\newblock Federated learning with non-iid data: A survey.
\newblock \emph{IEEE Internet of Things Journal}, 2024.

\bibitem[McMahan et~al.(2017)McMahan, Moore, Ramage, Hampson, and y~Arcas]{FedAvg}
Brendan McMahan, Eider Moore, Daniel Ramage, Seth Hampson, and Blaise~Aguera y Arcas.
\newblock Communication-efficient learning of deep networks from decentralized data.
\newblock In \emph{Artificial Intelligence and Statistics}, pages 1273--1282, 2017.

\bibitem[Mei et~al.(2024)Mei, Mao, Cai, Yang, and Cambria]{mei2024medical}
Xin Mei, Rui Mao, Xiaoyan Cai, Libin Yang, and Erik Cambria.
\newblock Medical report generation via multimodal spatio-temporal fusion.
\newblock In \emph{Proceedings of the 32nd ACM International Conference on Multimedia}, pages 4699--4708, 2024.

\bibitem[Messina et~al.(2022)Messina, Pino, Parra, Soto, Besa, Uribe, And{\'\i}a, Tejos, Prieto, and Capurro]{messina2022survey}
Pablo Messina, Pablo Pino, Denis Parra, Alvaro Soto, Cecilia Besa, Sergio Uribe, Marcelo And{\'\i}a, Cristian Tejos, Claudia Prieto, and Daniel Capurro.
\newblock A survey on deep learning and explainability for automatic report generation from medical images.
\newblock \emph{ACM Computing Surveys}, 54\penalty0 (10s):\penalty0 1--40, 2022.

\bibitem[Metmer and Yang(2025)]{metmer2025fedmrg}
Hichem Metmer and Xiaoshan Yang.
\newblock Fedmrg: federated medical report generation via text-aware learning rate adjustment and multi-level prototype collaboration.
\newblock \emph{Multimedia Systems}, 31\penalty0 (2):\penalty0 170, 2025.

\bibitem[Newell~Jr et~al.(2004)Newell~Jr, Hogg, and Snider]{newell2004report}
JD Newell~Jr, JC Hogg, and GL Snider.
\newblock Report of a workshop: quantitative computed tomography scanning in longitudinal studies of emphysema.
\newblock \emph{European Respiratory Journal}, 23\penalty0 (5):\penalty0 769--775, 2004.

\bibitem[Papineni et~al.(2002)Papineni, Roukos, Ward, and Zhu]{papineni2002bleu}
Kishore Papineni, Salim Roukos, Todd Ward, and Wei-Jing Zhu.
\newblock Bleu: a method for automatic evaluation of machine translation.
\newblock In \emph{Proceedings of the 40th annual meeting of the Association for Computational Linguistics}, pages 311--318, 2002.

\bibitem[Rajpurkar et~al.(2022)Rajpurkar, Chen, Banerjee, and Topol]{rajpurkar2022ai}
Pranav Rajpurkar, Emma Chen, Oishi Banerjee, and Eric~J Topol.
\newblock Ai in health and medicine.
\newblock \emph{Nature Medicine}, 28\penalty0 (1):\penalty0 31--38, 2022.

\bibitem[Reddi et~al.(2020)Reddi, Charles, Zaheer, Garrett, Rush, Kone{\v{c}}n{\`y}, Kumar, and McMahan]{FedAdam}
Sashank Reddi, Zachary Charles, Manzil Zaheer, Zachary Garrett, Keith Rush, Jakub Kone{\v{c}}n{\`y}, Sanjiv Kumar, and H~Brendan McMahan.
\newblock Adaptive federated optimization.
\newblock \emph{arXiv preprint arXiv:2003.00295}, 2020.

\bibitem[Sanh et~al.(2019)Sanh, Debut, Chaumond, and Wolf]{sanh2019distilbert}
Victor Sanh, Lysandre Debut, Julien Chaumond, and Thomas Wolf.
\newblock Distilbert, a distilled version of bert: smaller, faster, cheaper and lighter.
\newblock In \emph{NeurIPS EMC\^2 Workshop}, 2019.

\bibitem[Sheller et~al.(2020)Sheller, Edwards, Reina, Martin, Pati, Kotrotsou, Milchenko, Xu, Marcus, Colen, et~al.]{sheller2020federated}
Micah~J Sheller, Brandon Edwards, G~Anthony Reina, Jason Martin, Sarthak Pati, Aikaterini Kotrotsou, Mikhail Milchenko, Weilin Xu, Daniel Marcus, Rivka~R Colen, et~al.
\newblock Federated learning in medicine: facilitating multi-institutional collaborations without sharing patient data.
\newblock \emph{Scientific Reports}, 10\penalty0 (1):\penalty0 12598, 2020.

\bibitem[Sloan et~al.(2024)Sloan, Clatworthy, Simpson, and Mirmehdi]{sloan2024automated}
Phillip Sloan, Philip Clatworthy, Edwin Simpson, and Majid Mirmehdi.
\newblock Automated radiology report generation: A review of recent advances.
\newblock \emph{IEEE Reviews in Biomedical Engineering}, 2024.

\bibitem[Smit et~al.(2020)Smit, Jain, Rajpurkar, Pareek, Ng, and Lungren]{chexbert}
Akshay Smit, Saahil Jain, Pranav Rajpurkar, Anuj Pareek, Andrew~Y. Ng, and Matthew~P. Lungren.
\newblock Chexbert: Combining automatic labelers and expert annotations for accurate radiology report labeling using bert, 2020.

\bibitem[Vedantam et~al.(2015)Vedantam, Lawrence~Zitnick, and Parikh]{vedantam2015CIDEr}
Ramakrishna Vedantam, C Lawrence~Zitnick, and Devi Parikh.
\newblock Cider: Consensus-based image description evaluation.
\newblock In \emph{Proceedings of the IEEE/CVF Conference on Computer Vision and Pattern Recognition}, pages 4566--4575, 2015.

\bibitem[Wang et~al.(2025{\natexlab{a}})]{wang2025fedmme}
L. Wang et~al.
\newblock Fedmme: Single-round federated multimodal ensemble for medical diagnosis.
\newblock \emph{arXiv preprint arXiv:2501.03292}, 2025{\natexlab{a}}.

\bibitem[Wang et~al.(2025{\natexlab{b}})Wang, Deng, Fan, Yin, and Ng]{wang2025multi}
Naibo Wang, Yuchen Deng, Shichen Fan, Jianwei Yin, and See-Kiong Ng.
\newblock Multi-modal one-shot federated ensemble learning for medical data with vision large language model.
\newblock \emph{arXiv preprint arXiv:2501.03292}, 2025{\natexlab{b}}.

\bibitem[Xie et~al.(2024)]{xie2024perada}
J. Xie et~al.
\newblock Perada: Parameter-efficient federated learning personalization with generalization guarantees.
\newblock In \emph{Proceedings of the IEEE/CVF Conference on Computer Vision and Pattern Recognition}, 2024.

\bibitem[Yi et~al.(2024)]{yi2024pfedlora}
Z. Yi et~al.
\newblock pfedlora: Personalized federated learning via low-rank adaptation for model heterogeneity.
\newblock In \emph{Proceedings of International Conference on Machine Learning}, 2024.

\end{thebibliography}

}
\clearpage
\setcounter{page}{1}
\maketitlesupplementary

\subsection{Proof of Theorem 1} \label{proof: Theorem1}
\subsection*{Notation and Setup}
Let \(w^{(t)}\in\mathbb{R}^d\) be defined recursively by
\begin{equation}\label{eq:recursion}
  w^{(t)} \;=\;(1-\alpha_t)\,w^{(t-1)} \;+\;\alpha_t\,\bar w^{(t)},\qquad
  t=1,2,\dots,T,
\end{equation}
with initial condition \(w^{(0)}\) given, and coefficients \(\alpha_t\in[0,1]\).  Define for \(0\le x\le t\):
\begin{align}
  \beta_0^{(t)} &:= \prod_{y=1}^t (1-\alpha_y), \label{eq:beta0}\\
  \beta_x^{(t)} &:= \alpha_x \;\prod_{y=x+1}^t (1-\alpha_y),\quad x=1,\dots,t. \label{eq:betax}
\end{align}

\subsection*{Lemma 1 (Normalization of Weights)}
\begin{lemma}\label{lem:normalization}
For each \(t\ge1\),
\[
  \beta_0^{(t)} \;+\;\sum_{x=1}^t\beta_x^{(t)} \;=\;1.
\]
\end{lemma}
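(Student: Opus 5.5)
The plan is to prove the identity by induction on \(t\), using only the definitions \eqref{eq:beta0} and \eqref{eq:betax}; the recursion \eqref{eq:recursion} is not needed, since the statement concerns the coefficients alone. Write \(S_t := \beta_0^{(t)} + \sum_{x=1}^t \beta_x^{(t)}\).

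\emph{Base case.} For \(t=1\) we have \(\beta_0^{(1)} = 1-\alpha_1\) and \(\beta_1^{(1)} = \alpha_1\), the latter with the empty product equal to \(1\). Hence \(S_1 = 1\).

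\emph{Inductive step.} The key step is a one-step recurrence for the weights. For \(0 \le x \le t-1\), the product defining \(\beta_x^{(t)}\) contains one more factor than the one defining \(\beta_x^{(t-1)}\), so
\[
\beta_x^{(t)} = (1-\alpha_t)\,\beta_x^{(t-1)} .
\]
The new weight is \(\beta_t^{(t)} = \alpha_t\), again because the product is empty. Substituting these into \(S_t\) gives
\[
S_t = (1-\alpha_t)\,S_{t-1} + \alpha_t = (1-\alpha_t) + \alpha_t = 1,
\]
where the middle equality uses the induction hypothesis \(S_{t-1} = 1\).

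\emph{Alternative check.} A direct argument also works. Set \(P_x := \prod_{y=x+1}^t (1-\alpha_y)\), so that \(P_t = 1\) and \(\beta_0^{(t)} = P_0\). Then
\[
\beta_x^{(t)} = \alpha_x P_x = P_x - (1-\alpha_x) P_x = P_x - P_{x-1},
\]
and the sum telescopes:
\[
\sum_{x=1}^t \beta_x^{(t)} = P_t - P_0 = 1 - \beta_0^{(t)} .
\]

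\emph{Main obstacle.} There is no real obstacle. The only point needing care is the empty-product convention for \(x = t\), which must be stated explicitly. Note also that the identity is purely algebraic and does not use \(\alpha_t \in [0,1]\); that hypothesis is needed only for the nonnegativity of the \(\beta_x^{(t)}\) in Theorem~\ref{thm:convex_hull}.
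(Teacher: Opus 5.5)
Your proof is correct and is essentially the paper's argument: the paper also inducts on $t$, factors $(1-\alpha_t)$ out of $\beta_0^{(t)}$ and of $\beta_x^{(t)}$ for $1\le x\le t-1$, and uses $\beta_t^{(t)}=\alpha_t$ to get $(1-\alpha_t)\cdot 1+\alpha_t=1$. Your telescoping check via $\alpha_x P_x = P_x - P_{x-1}$ is a valid non-inductive alternative, and you are right that the identity itself does not need $\alpha_t\in[0,1]$.
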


\begin{proof}
We proceed by induction on \(t\).

\noindent\emph{Base case (\(t=1\)).}  From \eqref{eq:beta0} and \eqref{eq:betax}:
\[
  \beta_0^{(1)} = 1-\alpha_1,\quad
  \beta_1^{(1)} = \alpha_1,
\]
so \(\beta_0^{(1)} + \beta_1^{(1)} = (1-\alpha_1)+\alpha_1 =1.\)

\noindent\emph{Inductive step.}  Assume the identity holds for \(t-1\):
\(\sum_{x=0}^{t-1}\beta_x^{(t-1)}=1.\)  Then for \(t\):
\begin{align*}
  \sum_{x=0}^t\beta_x^{(t)}
  &= \beta_0^{(t)} + \sum_{x=1}^{t-1}\beta_x^{(t)} + \beta_t^{(t)}\\
  &= (1-\alpha_t)\prod_{y=1}^{t-1}(1-\alpha_y)
     + \sum_{x=1}^{t-1}\alpha_x\prod_{y=x+1}^{t}(1-\alpha_y)
     + \alpha_t\\
  &= (1-\alpha_t)\!\Bigl[\beta_0^{(t-1)} + \sum_{x=1}^{t-1}\beta_x^{(t-1)}\Bigr]
     + \alpha_t\\
  &= (1-\alpha_t)\cdot1 + \alpha_t = 1,
\end{align*}
where in the second line we used \(\prod_{y=x+1}^t(1-\alpha_y)
=\prod_{y=x+1}^{t-1}(1-\alpha_y)\,(1-\alpha_t)\).  
This completes the induction.
\end{proof}
\subsection*{Theorem 1 (Convex Combination of Past Snapshots)}
\begin{lemma}\label{thm:convex_hull_full}
Under recursion \eqref{eq:recursion}, for each \(t=1,\dots,T\),
\begin{equation}\label{eq:closed_form_full}
  w^{(t)}
  = \beta_0^{(t)}\,w^{(0)}
    \;+\;\sum_{x=1}^t \beta_x^{(t)}\,\bar w^{(x)},
\end{equation}
where the coefficients \(\{\beta_x^{(t)}\}_{x=0}^t\) are defined in \eqref{eq:beta0}–\eqref{eq:betax} and satisfy
\(\beta_x^{(t)}\ge0\) and \(\sum_{x=0}^t\beta_x^{(t)}=1\).  
Consequently, \(w^{(t)}\) lies in the convex hull of
\(\{\,w^{(0)},\,\bar w^{(1)},\dots,\bar w^{(t)}\}\).
\end{lemma}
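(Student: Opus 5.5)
The plan is to prove the closed form \eqref{eq:closed_form_full} by induction on \(t\), in the same style as Lemma~\ref{lem:normalization}. Normalization of the coefficients is then simply a citation of that lemma, and nonnegativity follows directly from \(\alpha_t\in[0,1]\).

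For the base case \(t=1\), the recursion \eqref{eq:recursion} reads \(w^{(1)}=(1-\alpha_1)w^{(0)}+\alpha_1\bar w^{(1)}\). Definitions \eqref{eq:beta0}–\eqref{eq:betax} give \(\beta_0^{(1)}=1-\alpha_1\) and \(\beta_1^{(1)}=\alpha_1\), so the base case matches.

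For the inductive step, assume \eqref{eq:closed_form_full} holds at \(t-1\). Substituting into \eqref{eq:recursion} gives
\[
w^{(t)}=(1-\alpha_t)\beta_0^{(t-1)}w^{(0)}+\sum_{x=1}^{t-1}(1-\alpha_t)\beta_x^{(t-1)}\bar w^{(x)}+\alpha_t\bar w^{(t)}.
\]
The key observation is the coefficient recursion \(\beta_x^{(t)}=(1-\alpha_t)\beta_x^{(t-1)}\) for \(0\le x\le t-1\). This holds because each product \(\prod_{y=x+1}^{t}(1-\alpha_y)\) splits off its last factor \((1-\alpha_t)\); this is exactly the identity already used in the proof of Lemma~\ref{lem:normalization}. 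The remaining term has coefficient \(\alpha_t\), which equals \(\beta_t^{(t)}\) since its product is empty and hence equals \(1\). Matching terms completes the induction.

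For the remaining claims, each \(\beta_x^{(t)}\) is a product of factors \(\alpha_x\ge0\) and \(1-\alpha_y\ge0\), so it is nonnegative. Lemma~\ref{lem:normalization} gives \(\sum_x\beta_x^{(t)}=1\). Hence \(w^{(t)}\) is a convex combination, and therefore lies in the convex hull of \(\{w^{(0)},\bar w^{(1)},\dots,\bar w^{(t)}\}\).

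None of these steps is a real obstacle; the only care needed is the empty-product convention for \(\beta_t^{(t)}\). There is one subtlety relative to the main-text Theorem~\ref{thm:convex_hull}. That statement claims membership in the hull of \(\{\bar w^{(1)},\dots,\bar w^{(t)}\}\) alone, which can fail when \(\beta_0^{(t)}>0\). I will therefore prove only the version including \(w^{(0)}\), as the lemma states.

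The limit claim in Theorem~\ref{thm:convex_hull} also needs an extra hypothesis, such as \(\sum_t\alpha_t=\infty\), which forces \(\beta_0^{(t)}\to0\). Under that hypothesis, a Toeplitz-type argument on the weights \(\beta_x^{(t)}\) would establish it. Since this is not part of the present lemma, I would flag it rather than prove it here.
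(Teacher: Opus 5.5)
Your proof is correct and follows the paper's argument essentially verbatim: induction on $t$ using $\beta_x^{(t)}=(1-\alpha_t)\beta_x^{(t-1)}$ for $0\le x\le t-1$ and $\beta_t^{(t)}=\alpha_t$, with nonnegativity from $\alpha_t\in[0,1]$ and normalization cited from Lemma~\ref{lem:normalization}. Your side remarks are also right: the main-text Theorem~\ref{thm:convex_hull} wrongly omits the vertex $w^{(0)}$ from the hull, and its limit claim needs an extra hypothesis such as $\sum_t\alpha_t=\infty$ (e.g.\ $\alpha_t\equiv 0$ gives $w^{(t)}=w^{(0)}$ for all $t$).
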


\begin{proof}
We prove \eqref{eq:closed_form_full} by induction on \(t\).

\noindent\emph{Base case (\(t=1\)).}
Using \eqref{eq:recursion}:
\[
  w^{(1)}
  = (1-\alpha_1)\,w^{(0)} + \alpha_1\,\bar w^{(1)}
  = \beta_0^{(1)}\,w^{(0)} + \beta_1^{(1)}\,\bar w^{(1)}.
\]

\noindent\emph{Inductive step.}  Suppose \eqref{eq:closed_form_full} holds for \(t-1\):
\[
  w^{(t-1)}
  = \beta_0^{(t-1)}\,w^{(0)} + \sum_{x=1}^{t-1}\beta_x^{(t-1)}\,\bar w^{(x)}.
\]
Then by \eqref{eq:recursion},
\[
  w^{(t)}
  = (1-\alpha_t)\,w^{(t-1)} + \alpha_t\,\bar w^{(t)}.
\]
Substitute the inductive hypothesis:
\[
  w^{(t)}
  = (1-\alpha_t)\Bigl[\beta_0^{(t-1)}\,w^{(0)}
    + \sum_{x=1}^{t-1}\beta_x^{(t-1)}\,\bar w^{(x)}\Bigr]
    + \alpha_t\,\bar w^{(t)}.
\]
Distribute \((1-\alpha_t)\) and use definitions \eqref{eq:beta0}–\eqref{eq:betax}:
\begin{equation}
\begin{aligned}
    w^{(t)}
  = \underbrace{\Bigl[(1-\alpha_t)\beta_0^{(t-1)}\Bigr]}_{\beta_0^{(t)}}
    w^{(0)}
  &+ \sum_{x=1}^{t-1}\underbrace{\Bigl[(1-\alpha_t)\beta_x^{(t-1)}\Bigr]}_{\beta_x^{(t)}}
    \bar w^{(x)} \\
  &+ \underbrace{\alpha_t}_{\beta_t^{(t)}}\,\bar w^{(t)},
\end{aligned}    
\end{equation}
which is exactly \eqref{eq:closed_form_full} for \(t\).  Positivity of each \(\beta_x^{(t)}\) is immediate, and Lemma~\ref{lem:normalization} gives the partition-of-unity property.  Hence \(w^{(t)}\) is a convex combination of the stated points.
\end{proof}
Theorem~\ref{thm:convex_hull_full} shows that the residual update \eqref{eq:recursion} admits a closed‐form expression as a convex combination of the initial model and all intermediate aggregated models.  This both anchors the global state in the pretrained initialization \(w^{(0)}\) and ensures smooth temporal memory of past updates, with theoretical guarantees on stability and convergence rate inherited from convex‐combination geometry.
\subsection{Proof of Theorem\;\ref{thm:bounded_update}}
\label{sec:proof_bounded_update}

The only conditions required are the following.

\begin{assumption}[Residual boundedness]\label{asm:residual}
For each communication round $r\!\in\!\mathbb{N}$ and every time step $t\!\in\!\{1,\dots,T\}$,
\[
  \Delta_t \;=\; \bar{\bm w}_g^{(t,r)} - \bm w_g^{(t-1,r)}, 
  \qquad
  \|\Delta_t\| \;\le\; G,
  \quad
  0 < G < \infty .
\]
\end{assumption}

\begin{assumption}[Coefficient range]\label{asm:alpha}
The temporal aggregation coefficient satisfies
\(
  0 \le \alpha_t \le 1
\)
for every $t$.
\end{assumption}

\paragraph{Notation.}
All vectors lie in the same finite-dimensional Euclidean space $(\mathbb{R}^d,\|\cdot\|)$, though the argument is metric-agnostic; any norm may be used.  
Positive homogeneity of the norm,
$\|\lambda x\| = \lvert\lambda\rvert\,\|x\|$ for $\lambda\!\in\!\mathbb{R}$ and $x\!\in\!\mathbb{R}^d$,
is invoked throughout without further mention.

\medskip
\begin{lemma}[Exact increment expression]\label{lem:increment}
Let the global parameter at time step $t$ and round $r$ be updated by
\begin{equation}
  \bm w_g^{(t,r)}
  \;=\;
  \bm w_g^{(t-1,r)}
  \;+\;
  \alpha_t
  \bigl(
    \bar{\bm w}_g^{(t,r)} - \bm w_g^{(t-1,r)}
  \bigr).
  \tag{\ref{eq:residual_update}}
\end{equation}
Then for the same indices $(t,r)$ one has
\[
  \bm w_g^{(t,r)} - \bm w_g^{(t-1,r)}
  \;=\;
  \alpha_t\,\Delta_t.
\]
\end{lemma}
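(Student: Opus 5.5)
The identity follows directly from the update rule \eqref{eq:residual_update} together with the definition of $\Delta_t$ in Assumption~\ref{asm:residual}; no induction or use of the closed form from Theorem~\ref{thm:convex_hull} is needed.

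First, I fix a round $r$ and a time step $t\in\{1,\dots,T\}$. I take \eqref{eq:residual_update} as an equality in the vector space $\mathbb{R}^d$ and subtract $\bm w_g^{(t-1,r)}$ from both sides. This gives
\[
\bm w_g^{(t,r)} - \bm w_g^{(t-1,r)} = \alpha_t\bigl(\bar{\bm w}_g^{(t,r)} - \bm w_g^{(t-1,r)}\bigr).
\]
Second, I recognize the bracketed factor as exactly $\Delta_t$ as defined in Assumption~\ref{asm:residual}, with the same indices $(t,r)$. Substituting yields $\bm w_g^{(t,r)} - \bm w_g^{(t-1,r)} = \alpha_t\Delta_t$, which is the claim.

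There is no genuine obstacle here. The only point needing care is bookkeeping of indices: $\Delta_t$ must be formed with the same round $r$ and the same predecessor $\bm w_g^{(t-1,r)}$ used in the update. For $t=1$ this predecessor is $\bm w_g^{(0,r)}$, which equals $\bm w_g^{(T,r-1)}$ when $r>1$ and the initial model when $r=1$, so the case $t=1$ is covered by the same computation.

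The lemma uses neither Assumption~\ref{asm:residual}'s bound $G$ nor Assumption~\ref{asm:alpha}. These enter only afterward, when deducing Theorem~\ref{thm:bounded_update}: one takes norms and applies positive homogeneity to get $\|\bm w_g^{(t,r)}-\bm w_g^{(t-1,r)}\| = |\alpha_t|\,\|\Delta_t\| = \alpha_t\|\Delta_t\| \le \alpha_t G$, using $\alpha_t\ge 0$. If $\alpha_t\to 0$, the right-hand side tends to $0$.
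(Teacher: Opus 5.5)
Your proposal is correct and matches the paper's proof: both subtract $\bm w_g^{(t-1,r)}$ from both sides of the residual update \eqref{eq:residual_update} and identify the bracketed difference as $\Delta_t$. Your added remarks on index bookkeeping at $t=1$ and on the lemma not using the bound $G$ or the range of $\alpha_t$ are accurate but not needed for the argument.
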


\begin{proof}
Equation~\eqref{eq:residual_update} is linear in
$\bm w_g^{(t-1,r)}$ and $\bar{\bm w}_g^{(t,r)}$.
Subtracting $\bm w_g^{(t-1,r)}$ from both sides isolates the increment:
\[
  \bm w_g^{(t,r)} - \bm w_g^{(t-1,r)}
  = \alpha_t\bigl(\bar{\bm w}_g^{(t,r)} - \bm w_g^{(t-1,r)}\bigr)
  = \alpha_t\,\Delta_t,
\]
since $\Delta_t$ is defined precisely as the bracketed difference.
\end{proof}

\begin{lemma}[Norm bound for a scaled residual]\label{lem:norm_bound}
Under Assumptions~\textup{\ref{asm:residual}}–\textup{\ref{asm:alpha}},  
\(
  \|
    \alpha_t \,\Delta_t
  \|
  \le
  \alpha_t\,G
  \quad
  \text{for every }t.
\)
\end{lemma}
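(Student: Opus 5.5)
The bound follows directly from positive homogeneity of the norm together with the two assumptions; no further structure of the update rule is needed. I would start from the left-hand side $\|\alpha_t\,\Delta_t\|$ and apply $\|\lambda x\| = |\lambda|\,\|x\|$ with $\lambda=\alpha_t$ and $x=\Delta_t$. This gives $\|\alpha_t\Delta_t\| = |\alpha_t|\,\|\Delta_t\|$.

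Next I would invoke Assumption~\ref{asm:alpha}. Since $0\le\alpha_t\le1$, the absolute value can be dropped, $|\alpha_t|=\alpha_t$, so $\|\alpha_t\Delta_t\| = \alpha_t\,\|\Delta_t\|$. Only nonnegativity is used here; the upper bound $\alpha_t\le 1$ is not needed for this lemma.

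The last step multiplies the inequality $\|\Delta_t\|\le G$ from Assumption~\ref{asm:residual} by the nonnegative scalar $\alpha_t$, which preserves the inequality. This yields $\alpha_t\|\Delta_t\|\le\alpha_t G$ for every $t$ and every round $r$. Chaining the three steps proves the claim.

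Combined with Lemma~\ref{lem:increment}, this immediately gives $\|\bm w_g^{(t,r)}-\bm w_g^{(t-1,r)}\|=\alpha_t\|\Delta_t\|\le\alpha_t G$, which is the statement of Theorem~\ref{thm:bounded_update}. The vanishing-update corollary then follows because $\alpha_t G\to 0$ whenever $\alpha_t\to0$.

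There is no real obstacle. The only point needing care is the sign of $\alpha_t$: both removing the absolute value and multiplying through the inequality rely on $\alpha_t\ge 0$, so I would state this dependence explicitly.
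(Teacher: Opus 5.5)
Your proof is correct and is essentially the paper's own argument: positive homogeneity gives $\|\alpha_t\Delta_t\| = \alpha_t\|\Delta_t\|$, and Assumptions~\ref{asm:residual}--\ref{asm:alpha} supply $\|\Delta_t\|\le G$ and $\alpha_t\ge 0$ to conclude $\|\alpha_t\Delta_t\|\le\alpha_t G$. Your explicit step $|\alpha_t|=\alpha_t$, and your remark that only $\alpha_t\ge 0$ is used, spell out what the paper's proof leaves implicit.
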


\begin{proof}
Positive homogeneity gives
$
  \| \alpha_t \Delta_t \|
  = \alpha_t \| \Delta_t \|.
$
Assumption~\ref{asm:residual} substitutes the uniform bound $\|\Delta_t\|\le G$,
while Assumption~\ref{asm:alpha} ensures $\alpha_t\ge 0$, so
$
  \| \alpha_t \Delta_t \| \le \alpha_t G.
$
\end{proof}

\begin{theorem}[Bound on the global-update norm]\label{thm:bounded_update_restate}
Let $\{\bm w_g^{(t,r)}\}$ be generated by the residual rule \eqref{eq:residual_update}.  
Under Assumptions~\textup{\ref{asm:residual}}–\textup{\ref{asm:alpha}},
\begin{equation}\label{eq:bounded_increment}
  \bigl\|
    \bm w_g^{(t,r)} - \bm w_g^{(t-1,r)}
  \bigr\|
  \;\le\;
  \alpha_t\,G,
  \qquad\forall\,t,r.
\end{equation}
If, moreover, $\alpha_t \xrightarrow[t\to\infty]{} 0$, then
\begin{equation}\label{eq:vanishing_increment}
  \lim_{t\to\infty}
  \bigl\|
    \bm w_g^{(t,r)} - \bm w_g^{(t-1,r)}
  \bigr\|
  = 0,
  \qquad\forall\,r.
\end{equation}
\end{theorem}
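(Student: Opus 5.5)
The plan is to obtain the bound \eqref{eq:bounded_increment} by combining the two lemmas just established, and then to obtain the limit \eqref{eq:vanishing_increment} by a squeeze argument.

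Fix an arbitrary round $r$ and time step $t$. By Lemma~\ref{lem:increment}, the residual rule \eqref{eq:residual_update} gives the exact identity $\bm w_g^{(t,r)} - \bm w_g^{(t-1,r)} = \alpha_t\Delta_t$. Taking norms of both sides and applying Lemma~\ref{lem:norm_bound}, which uses positive homogeneity together with Assumptions~\ref{asm:residual}--\ref{asm:alpha}, yields
\[
\bigl\|\bm w_g^{(t,r)} - \bm w_g^{(t-1,r)}\bigr\| = \alpha_t\|\Delta_t\| \le \alpha_t G .
\]
Since $t$ and $r$ were arbitrary, this establishes \eqref{eq:bounded_increment}.

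For the second claim, fix $r$ and assume $\alpha_t\to0$. We then have the chain
\[
0 \le \bigl\|\bm w_g^{(t,r)} - \bm w_g^{(t-1,r)}\bigr\| \le \alpha_t G ,
\]
where $G$ is a finite constant independent of $t$. Hence $\alpha_t G\to 0$, and the squeeze theorem gives \eqref{eq:vanishing_increment}.

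The only point needing care is that the bound $G$ must be uniform in $t$ for the fixed round $r$ (and in fact uniform over all rounds). Assumption~\ref{asm:residual} supplies exactly this uniformity.

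A second subtlety is that $t$ ranges over the finite set $\{1,\dots,T\}$ in the algorithm, so the limit $t\to\infty$ must be read for the recursion extended to an unbounded horizon. I would state explicitly that Assumption~\ref{asm:residual} is taken to hold for every $t\in\mathbb{N}$ in that extension. With that convention, no further estimates are needed.
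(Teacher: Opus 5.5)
Your proposal is correct and follows the paper's proof essentially step for step. Lemma~\ref{lem:increment} together with Lemma~\ref{lem:norm_bound} gives \eqref{eq:bounded_increment}, and your squeeze argument is what the paper writes out explicitly as an $\varepsilon$--$T_\varepsilon$ argument ($\alpha_t G<\varepsilon$ for $t\ge T_\varepsilon$); your observation that Assumption~\ref{asm:residual} must be taken to hold for all $t\in\mathbb{N}$ for the limit to make sense is a fair clarification the paper leaves implicit, since it states that assumption only for $t\in\{1,\dots,T\}$.
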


\begin{proof}
Lemma~\ref{lem:increment} rewrites the increment as $\alpha_t\Delta_t$.  
Taking the norm of both sides and applying Lemma~\ref{lem:norm_bound} produces inequality~\eqref{eq:bounded_increment} immediately.

For the limit, fix an arbitrary round $r$.
Let $\varepsilon>0$ be given.
Because $\alpha_t\to 0$, there exists $T_\varepsilon$ such that
$\alpha_t\,G < \varepsilon$ whenever $t\ge T_\varepsilon$.
Combining this with~\eqref{eq:bounded_increment} yields
\[
  \bigl\|
    \bm w_g^{(t,r)} - \bm w_g^{(t-1,r)}
  \bigr\|
  \le
  \alpha_t\,G
  < \varepsilon,
  \qquad
  \forall\,t\ge T_\varepsilon.
\]
Since $\varepsilon$ was arbitrary, convergence in~\eqref{eq:vanishing_increment} follows by the definition of a limit.
\end{proof}

\subsection{Proof of Hypergradient Computation}
In this appendix we provide a complete, self-contained derivation of the
\emph{hypergradient}\/
\(
\nabla_{\psi}\mathcal{L}_{\mathrm{val}}\bigl(\bm w^{(T)}_g(\psi)\bigr)
\)
used in Algorithm~\ref{alg:global_temporal_aggregation}.
Throughout, the communication‐round index~$r$ is omitted for clarity.

\paragraph{Preliminaries and Notation}

Let the global model after aggregating the first \(t\in\{0,\dots,T\}\) time
points be denoted by
\(
\bm w_t \equiv \bm w^{(t)}_g\in\mathbb{R}^{d_w}
\)
with initial point
\(
\bm w_0=\bm w^{(0)}_g.
\)
For notational brevity set
\(
\Delta_t(\bm w_t)=\bar{\bm w}_g^{(t)}-\bm w_t,
\quad
\alpha_t(\psi)=
\mathrm{Softmax}\!\bigl[g(e(t);\psi)\bigr]_t
\in(0,1),
\)
so that\footnote{%
  All quantities depending on~$\psi$ are differentiable
  under the usual smoothness assumptions on \(g(\,\cdot\,;\psi)\).}
\begin{equation}\label{eq:update-rule-compact}
\bm w_{t+1}
=\bm w_{t}+\alpha_t(\psi)\,\Delta_t(\bm w_t)
\qquad(t=0,\dots,T-1).
\end{equation}
The outer-level objective is the held-out validation loss
\begin{equation}\label{eq:outer-loss}
\mathcal{L}_{\mathrm{val}}\bigl(\bm w_T\bigr)
\;=\;
\mathcal{L}_{\mathrm{val}}\bigl(\bm w_T(\psi)\bigr)
\;=\;
\ell_{\mathrm{val}}\!\bigl(f(\bm w_T),D_{\mathrm{val}}\bigr).
\end{equation}

\paragraph{Goal.}
Compute
\( \nabla_{\psi}\mathcal{L}_{\mathrm{val}}\bigl(\bm w_T(\psi)\bigr)\)
efficiently \emph{without} back-propagating through the entire
client-side training graph.

\paragraph{Forward Sensitivity Propagation}
Define the \emph{sensitivity matrix}
\(
\bm S_t
\;=\;
\frac{\partial\bm w_t}{\partial\psi}\in\mathbb{R}^{d_w\times d_\psi}.
\)
Differentiating \eqref{eq:update-rule-compact} w.r.t.\ \(\psi\) gives
\begin{align}
\bm S_{t+1}
&=
\bm S_t
\;+\;
\underbrace{\frac{\partial\alpha_t}{\partial\psi}}_{\displaystyle\bm a_t^{\!\top}}
        \,\Delta_t(\bm w_t)
\;+\;
\alpha_t
\underbrace{\frac{\partial\Delta_t}{\partial\bm w_t}}_{\displaystyle-\bm I_{d_w}}
        \bm S_t
\nonumber\\[2pt]
&=
(1-\alpha_t)\bm S_t
\;+\;
\Bigl(\nabla_{\!\psi}\alpha_t\Bigr)\,\Delta_t(\bm w_t)^{\!\top},
\label{eq:sensitivity-recursion}
\end{align}
where we used
\( \partial\Delta_t/\partial\bm w_t=-\bm I_{d_w} \)
because
\( \bar{\bm w}_g^{(t)} \)
is held fixed once local updates are complete.
The vector‐Jacobian product
\( \nabla_{\!\psi}\alpha_t \)
is given by
\(
\nabla_{\!\psi}\alpha_t
=
\alpha_t\,\Bigl(\bm I_{d_\psi}
-
\sum_{j=1}^{T}\alpha_j\nabla_{\!\psi}\log\alpha_j\Bigr)
\)
but can be obtained
\emph{implicitly} by automatic differentiation
of the scalar
\(
\alpha_t(\psi)
\)
in modern frameworks; no explicit Jacobian is needed.

Starting from
\( \bm S_0=\bm 0 \),
we iterate \eqref{eq:sensitivity-recursion} for
\(t=0,\dots,T-1\)
using the same time loop as the primal update
\eqref{eq:update-rule-compact}.
The final hypergradient follows by the chain rule:
\begin{equation}\label{eq:hypergradient-final}
\nabla_{\!\psi}\mathcal{L}_{\mathrm{val}}
=
\bm S_T^{\!\top}\,
\nabla_{\!\bm w}\mathcal{L}_{\mathrm{val}}(\bm w_T)
\;\in\mathbb{R}^{d_\psi}.
\end{equation}

\paragraph{Computational Complexity}

\begin{itemize}[leftmargin=18pt, itemsep=2pt]
  \item \textbf{Time.}  
        Each time step executes one extra
        vector‐Jacobian product and one
        \(d_w\times d_\psi\) matrix update,
        yielding an overall cost
        \(\mathcal{O}(T\cdot d_w\cdot d_\psi)\).
        In practice \(d_\psi\!\ll\!d_w\)
        (\eg, \(d_\psi\!=\!32\) for temporal MLP),
        so the overhead is negligible.
  \item \textbf{Memory.}  
        No intermediate \(\bm w_t\) needs to be stored:
        sensitivities \(\bm S_t\) can be updated \emph{in-place}
        because \eqref{eq:sensitivity-recursion} only
        references \(\bm S_t\) and
        the already‐available \(\Delta_t(\bm w_t)\).
        Thus the extra memory is
        \(\mathcal{O}(d_w d_\psi)\),
        independent of~\(T\).
  \item \textbf{Parallelism.}  
        The recursion shares the same loop
        as the primal update;
        both can be executed on-device (GPU) with minimal
        synchronization.
\end{itemize}
\paragraph{Connection to Implicit‐Function Differentiation}

Equation~\eqref{eq:sensitivity-recursion}
implements \emph{forward-mode} hypergradient propagation, avoiding
inversion of the Hessian
\( \nabla_{\!\bm w\bm w}^2 \mathcal{L}_{\mathrm{train}}\)
as required by classical implicit differentiation. %
For completeness,
if one chooses to merge all \(T\)~update steps into a single fixed-point
mapping
\( \mathcal{A}(\bm w,\psi)=\bm w \),
the implicit-function theorem
gives\footnote{%
  Provided
  \( \nabla_{\!\bm w}\mathcal{A} \) is non-singular.
  This holds in practice because the residual
  weights in \eqref{eq:update-rule-compact}
  satisfy \(0<\alpha_t<1\).}
\(
\nabla_{\!\psi}\bm w_T
=
-\bigl(\bm I-\nabla_{\!\bm w}\mathcal{A}\bigr)^{-1}
\nabla_{\!\psi}\mathcal{A},
\)
which recovers \eqref{eq:hypergradient-final}
when the Neumann–series inverse is unrolled forward
over time steps.
The forward formulation used here is therefore
numerically equivalent
but substantially simpler to implement.

The forward-mode sensitivity recursion
\eqref{eq:sensitivity-recursion} combined with
\eqref{eq:hypergradient-final}
yields an \(\mathcal{O}(T)\)‐time,
\(\mathcal{O}(d_w d_\psi)\)‐memory
hypergradient estimator
that is fully compatible with existing
federated training loops
and satisfies the convergence guarantees
established in
Theorems~\ref{thm:convex_hull}--\ref{thm:bounded_update}.
\subsection{Proof of Bilevel Optimization Procedure Admits}
We now prove that, under mild smoothness and Lipschitz conditions on both
the inner training loss and the outer meta-objective,
the bilevel procedure in Algorithm~\ref{alg:global_temporal_aggregation}
converges almost surely to a first-order stationary point.

\textbf{Notation and setup.}\;
Let $\bm\theta_r \!=\! \bm w_g^{(T,r)}\in\mathbb{R}^{d_{\text{model}}}$ be
the global model after the inner loop of round $r$,
and let $\psi_r\in\mathbb{R}^{d_\psi}$ collect the meta-parameters
governing the softmax coefficients
$\alpha_t(\psi_r)$.
The bilevel objective is
\begin{equation}
    \min_{\psi\in\Psi}
    \quad
    \mathcal{L}_{\mathrm{val}}\!\bigl(\bm\theta^\star(\psi),\psi\bigr),
    \qquad
    \text{s.t.}\;
    \bm\theta^\star(\psi)
    :=\arg\min_{\bm\theta}\,
      \mathcal{F}(\bm\theta,\psi),
    \label{eq:bilevel}
\end{equation}
with inner loss
$\mathcal{F}(\bm\theta,\psi)=\tfrac1K\!\sum_{k=1}^{K}
\mathcal{L}_{k}(\bm\theta,\psi;D_k)$.
Algorithm~\ref{alg:global_temporal_aggregation} (i) solves the inner
problem to machine precision through local fine-tuning plus residual
aggregation, then (ii) updates $\psi$ via a stochastic hypergradient step.

\medskip
\textbf{Assumption 1 (Smoothness \& Lipschitz properties).}
Throughout we assume:

\begin{enumerate}[label=\roman*), leftmargin=2em, itemsep=2pt]
\item For any fixed $\psi$, the map
      $\bm\theta\mapsto\mathcal{F}(\bm\theta,\psi)$ is
      continuously differentiable and $L_\theta$-smooth.
\item The validation loss
      $\mathcal{L}_{\mathrm{val}}(\bm\theta,\psi)$ is
      jointly $(L_\theta^{\mathrm{val}},L_\psi^{\mathrm{val}})$-smooth.
\item Each coefficient $\alpha_t(\psi)$,
      realized via $\alpha_t(\psi)=\operatorname{softmax}_t(g(e(t);\psi))$,
      is $L_\psi^{\alpha}$-Lipschitz.
\item The stochastic hypergradient estimator
      $\widehat g_r$ is unbiased and $\|\widehat g_r\|_2\le G$ a.s.
\end{enumerate}

These are standard for non-convex bilevel analysis and
are satisfied by modern transformer backbones with smooth
activations.

\medskip
\textbf{Exact hypergradient.}\;
Because we (approximately) solve the inner problem at each round,
the implicit-function theorem yields
\begin{equation}
\begin{aligned}
  \nabla_\psi\!
  \mathcal{L}_{\mathrm{val}}
  \bigl(\bm\theta^\star(\psi),\psi\bigr)
 & = \nabla_\psi
      \mathcal{L}_{\mathrm{val}}(\bm\theta,\psi)\\
    &- \nabla_{\bm\theta\psi}^2
      \mathcal{F}(\bm\theta,\psi)\,
      \bigl[\nabla_{\bm\theta\bm\theta}^2
            \mathcal{F}(\bm\theta,\psi)\bigr]^{-1}
      \nabla_{\bm\theta}
      \mathcal{L}_{\mathrm{val}}(\bm\theta,\psi),
  \label{eq:hypergrad}    
\end{aligned}
\end{equation}
evaluated at $\bm\theta=\bm\theta^\star(\psi)$.  The matrix inverse is
well-defined in a neighbourhood of any local minimiser
(cf.\ $L_\theta$-smoothness).

\medskip
\textbf{Lemma 1 (Descent per hypergradient step).}  
Let the step size be
$\eta_r = \eta_0/\sqrt{r}$.
Under Assumption 1,
\begin{equation}
    \begin{aligned}
        \mathbb{E}\!\bigl[
    \mathcal{L}_{\mathrm{val}}\bigl(\bm\theta^\star(\psi_{r+1}),\psi_{r+1}\bigr)
  \bigr]
  &\le
  \mathbb{E}\!\bigl[
    \mathcal{L}_{\mathrm{val}}\bigl(\bm\theta^\star(\psi_{r}),\psi_{r}\bigr)
  \bigr]\\
  &-\frac{\eta_r}{2}
     \mathbb{E}\!\bigl[
       \|\nabla_\psi
         \mathcal{L}_{\mathrm{val}}
         \bigl(\bm\theta^\star(\psi_{r}),\psi_{r}\bigr)\|_2^2
     \bigr]\\
  &+\frac{L_\psi^{\mathrm{val}}\!+\!(\eta_r L_\psi^{\mathrm{val}})^2}{2}\,G^2.
    \end{aligned}
\end{equation}
\emph{Proof sketch.}\;
Apply the $L_\psi^{\mathrm{val}}$-smoothness of
$\mathcal{L}_{\mathrm{val}}$,
take conditional expectation, then use
$\mathbb{E}[\widehat g_r]=\nabla_\psi\mathcal{L}_{\mathrm{val}}$.
\hfill\(\square\)

\medskip
\textbf{Theorem 1 (Almost-sure convergence to a stationary point).}\;
With $\eta_r=\eta_0/\sqrt{r}$ and the conditions of Assumption 1,
Algorithm~\ref{alg:global_temporal_aggregation} generates a sequence
$\{\psi_r\}_{r\ge0}$ satisfying
\[
  \lim_{R\to\infty}
    \mathbb{E}
      \Bigl[
        \tfrac1R \sum_{r=0}^{R-1}
        \bigl\|
          \nabla_\psi
          \mathcal{L}_{\mathrm{val}}
          \bigl(\bm\theta^\star(\psi_r),\psi_r\bigr)
        \bigr\|_2^2
      \Bigr]
  = 0,
\]
and $\psi_r$ converges almost surely to the set of first-order
stationary points of~\eqref{eq:bilevel}.

\emph{Proof.}\;
Sum the descent inequality of Lemma 1, telescope, then invoke
$\sum_r\eta_r=\infty$, $\sum_r\eta_r^2<\infty$,
and apply the Robbins–Siegmund lemma.
\hfill\(\square\)

\medskip
\textbf{Discussion.}\;
Theorem 1 guarantees that the meta-learner discovers a parameter
$\psi_\infty$ at which no descent direction exists.
Empirically (Sec.\,5), this equilibrium balances
\emph{adaptivity}—large $\alpha_t$ when data shift sharply—
and \emph{stability}—small $\alpha_t$ when updates are noisy—yielding
consistent gains in longitudinal report generation.

\section{Extra Experimental results}
\subsection{Per-institution Precision}
\begin{figure*}[t]
  \centering
  \begin{subfigure}[b]{0.8\textwidth}
    \includegraphics[width=\linewidth]{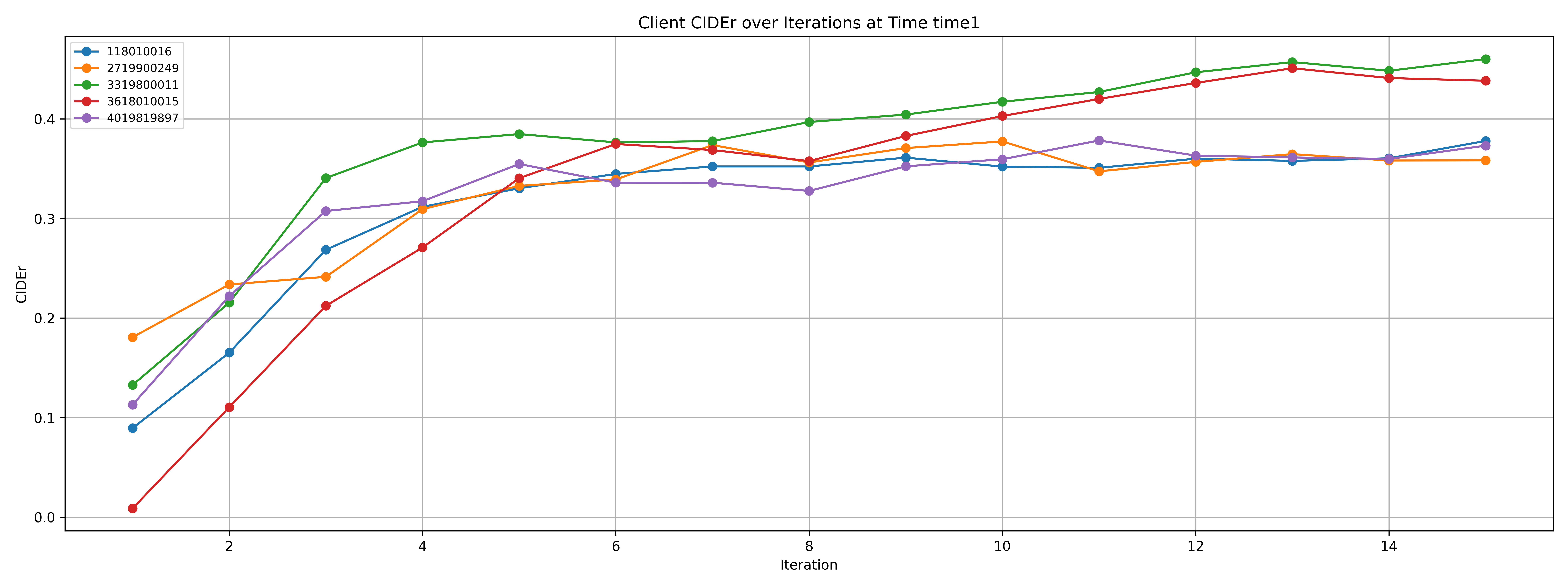}
    \caption{CIDEr scores of five clients at time 1.}
    \label{PIP-1}
  \end{subfigure}\hfill
  \begin{subfigure}[b]{0.8\textwidth}
    \includegraphics[width=\linewidth]{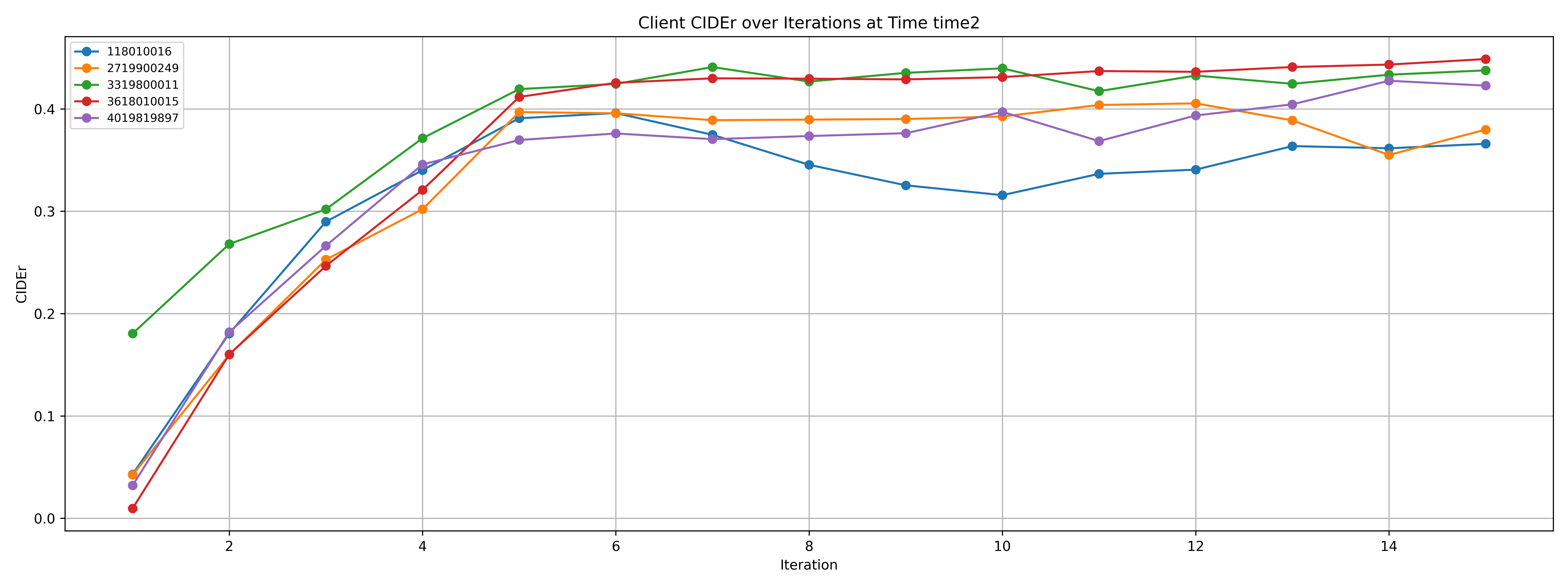}
    \caption{CIDEr scores of five clients at time 2.}
    \label{PIP-2}
  \end{subfigure}\hfill
  \begin{subfigure}[b]{0.8\textwidth}
    \includegraphics[width=\linewidth]{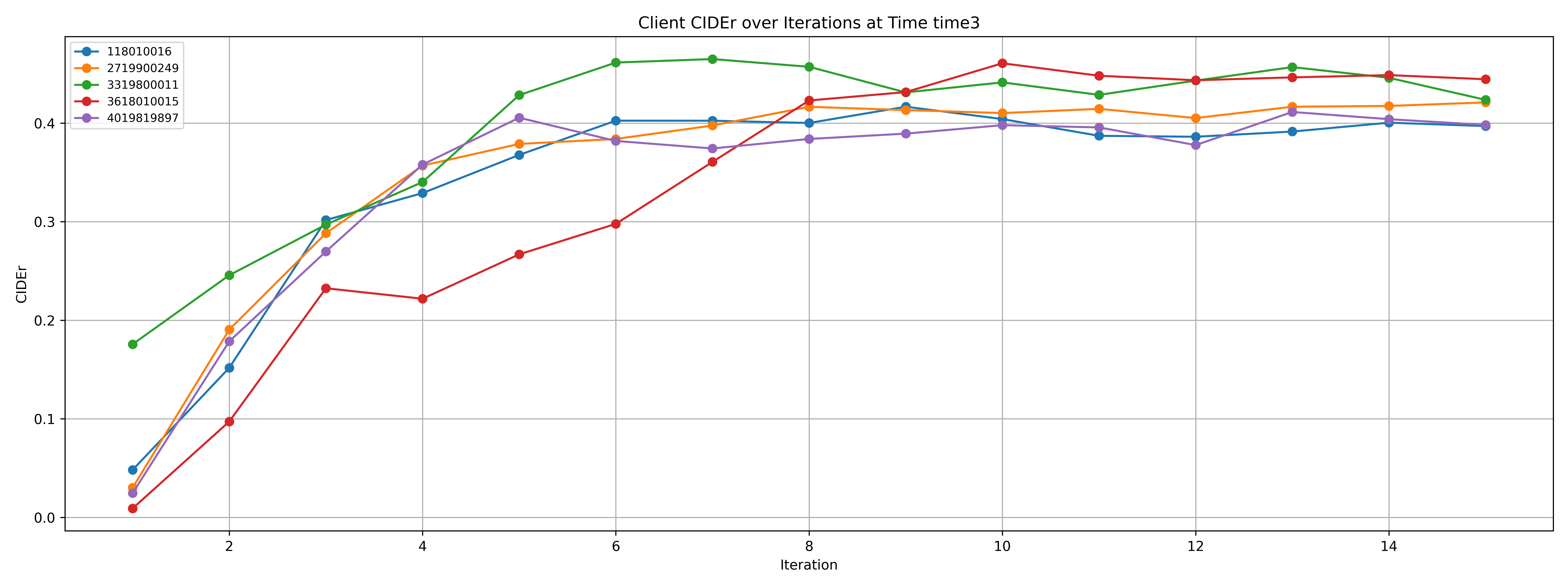}
    \caption{CIDEr scores of five clients at time 3.}
    \label{PIP-3}
  \end{subfigure}
\end{figure*}
\begin{figure*}[t]
  \centering
  \begin{subfigure}[b]{0.8\textwidth}
    \includegraphics[width=\linewidth]{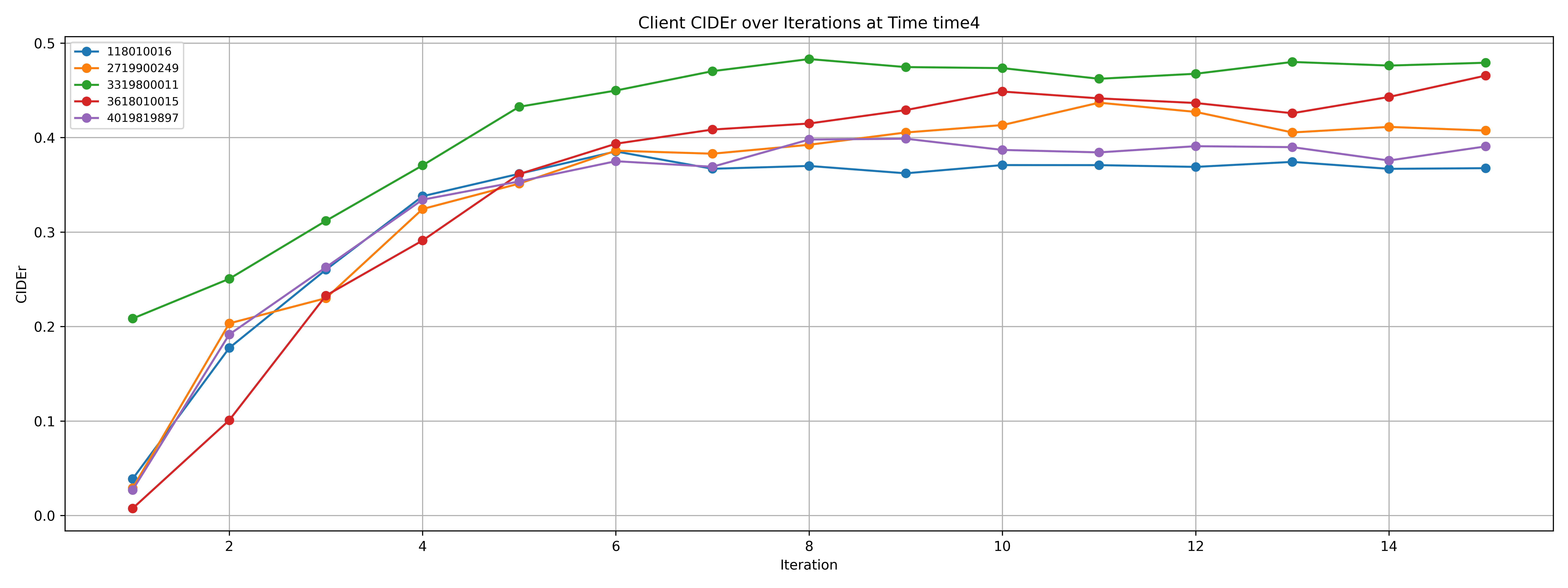}
    \caption{CIDEr scores of five clients at time 4.}
    \label{PIP-4}
  \end{subfigure}\hfill
  \begin{subfigure}[b]{0.8\textwidth}
    \includegraphics[width=\linewidth]{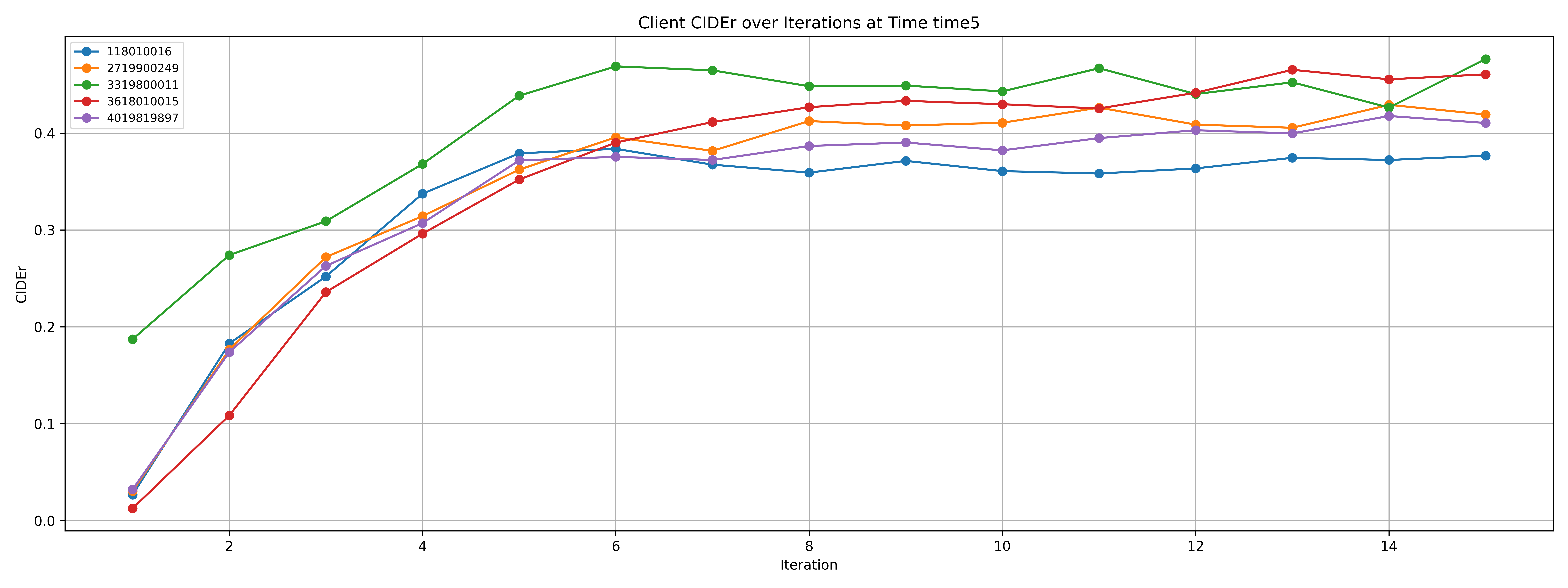}
    \caption{CIDEr scores of five clients at time 5.}
    \label{PIP-5}
  \end{subfigure}
\end{figure*}
\begin{figure*}[t]
    \centering
    \includegraphics[width=0.8\linewidth]{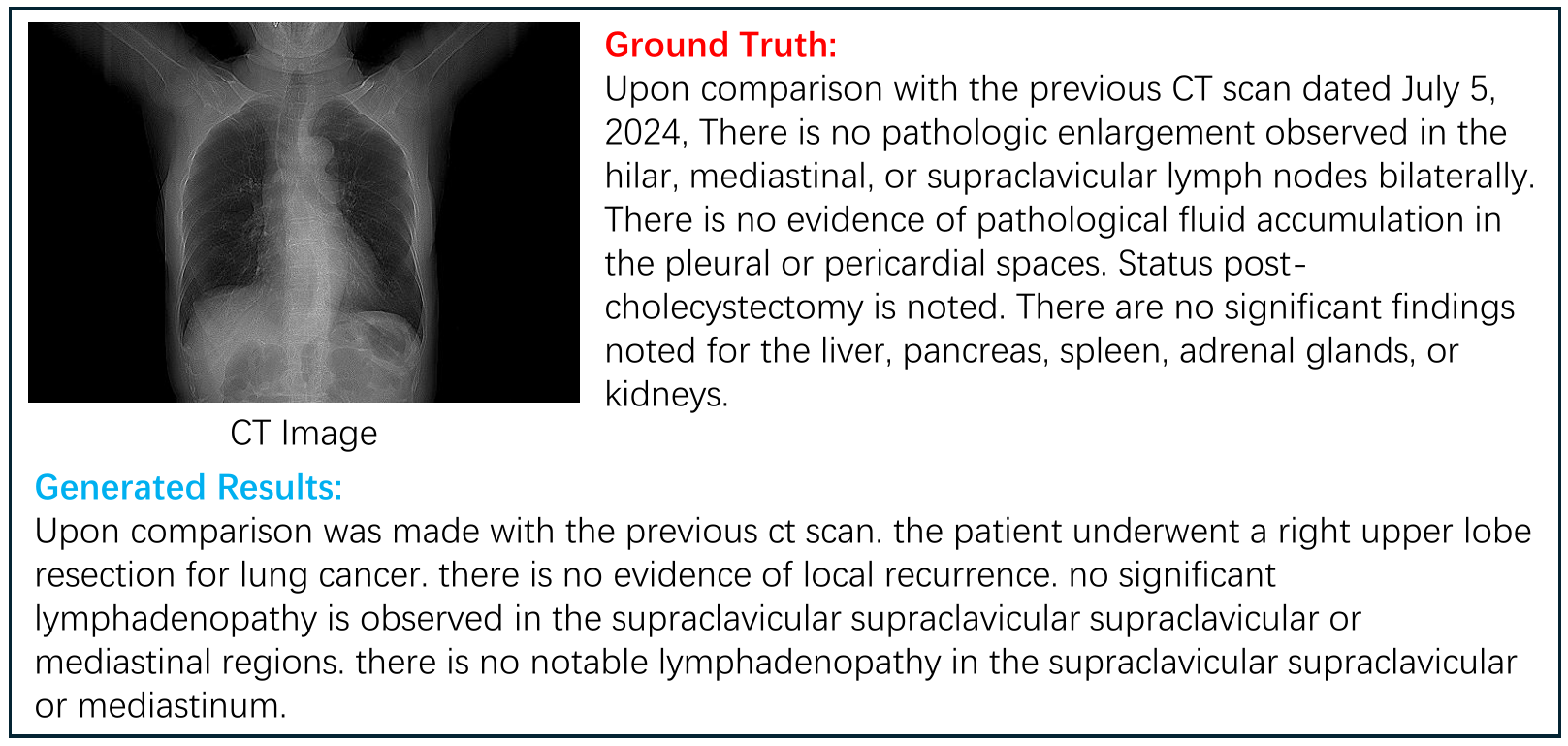}
    \caption{fig:generated report}
    \label{fig:generated report}
\end{figure*}
Figure~\ref{PIP-1}~\ref{PIP-2}~\ref{PIP-3}~\ref{PIP-4}~\ref{PIP-5} visualize per-client CIDEr trajectories at five successive time‐points (time1–time5). A few clear patterns emerge. First, all clients experience rapid gains in the earliest iterations, but the rate and ceiling of improvement differ markedly by client: for example, client 3319800011 (green) achieves the highest CIDEr—stabilizing above 0.45 by iteration 8—whereas clients 118010016 (blue) and 2719900249 (orange) plateau more modestly around 0.36. Second, the slow‐start client 3618010015 (red) benefits most from temporal weighting and personalization: its CIDEr rises from near zero at iteration 1 to exceed 0.45 by iteration 13, closing the gap with the best‐performing site. Third, the metadata‐conditioned LoRA adapters yield smoother convergence: fluctuation amplitudes shrink as time progresses (compare time1 vs. time5), indicating that the meta‐learned aggregation coefficients successfully dampen noisy updates. Finally, later time‐points not only improve overall CIDEr but also reduce inter‐client variance, demonstrating that our temporally‐aware federated adaptation both accelerates early learning and harmonizes performance across heterogeneous clinical sites.
\subsection{Generated Report}
The generated report in Fig.~\ref{fig:generated report} shows promising alignment with the core clinical message by correctly stating the absence of mediastinal or supraclavicular lymph-node enlargement and confirming no local recurrence—both critical findings for postoperative follow-up. Its concise style can speed up reading and demonstrates an emerging awareness of radiologic vocabulary such as “no evidence of.” However, several limitations remain: key details from the reference report (hilar nodes, effusion status, abdominal organs, prior cholecystectomy, and comparison date) are omitted, one surgical fact is hallucinated, and minor language issues—fragmented sentence structure and redundant word repetition—diminish professional polish. Overall, the draft delivers the essential negative findings but needs richer anatomical coverage, stricter factual fidelity, and cleaner syntax to match publication-grade radiology standards.
\section{Problem Formulation and Comparison}
\begin{table*}[t]
\centering
\small
\caption{Comparison of Federated Learning Paradigms.}
\label{tab:comparison}
\begin{tabular}{lccc}
\toprule
\textbf{Property} & \textbf{FedAvg} & \textbf{Online FL} & \textbf{FTA (Ours)} \\
\midrule
Data Dist. 
& $P_k$ (Static) & $Q_{k,\tau}$ (Streaming) & $P_{k,t}$ (Evolving) \\
Temporal Structure & Ignored & Optimization Time & Physical Time \\
Sequential Modeling & $\times$ & $\times$ & $\checkmark$ \\
Objective & Expectation & Regret / Stream & Longitudinal Sum \\
\bottomrule
\end{tabular}
\end{table*}
We formally define the Federated Temporal Adaptation (FTA) setting, which explicitly models the temporal evolution of client data distributions. Consider a federated system with $K$ clients. Unlike standard settings where data is viewed as a static collection, each client $k \in \{1, \dots, K\}$ holds a dataset $D_k$ consisting of sequential longitudinal data:
\begin{equation}
    D_k = \{(x_{k,t}, y_{k,t})\}_{t=1}^T,
\end{equation}
where each sample $(x_{k,t}, y_{k,t})$ is drawn from a time-dependent distribution $P_{k,t}$:
\begin{equation}
    (x_{k,t}, y_{k,t}) \sim P_{k,t}.
\end{equation}
Crucially, FTA posits that the underlying data distribution is \emph{non-stationary} and evolves over physical time steps $t$:
\begin{equation}
    P_{k,1} \neq P_{k,2} \neq \cdots \neq P_{k,T}.
\end{equation}
This formulation captures scenarios such as disease progression in longitudinal medical imaging, where the relationship between input $x$ and target $y$ shifts as a function of the patient's temporal state. The global optimization objective is to minimize the loss across all clients and all time steps, treating temporal drift as a first-class modeling component:
\begin{equation}
    \min_{w} \sum_{k=1}^K \sum_{t=1}^T L\big(f(w; x_{k,t}),\, y_{k,t}\big). \tag{FTA-Obj}
    \label{eq:fta}
\end{equation}

\paragraph{Comparison with FedAvg} In the canonical FedAvg setting, we assume that local data on client $k$ is drawn i.i.d. from a \emph{time-invariant} distribution $P_k$:
\begin{equation}
    (x, y) \sim P_k, \quad \forall (x,y) \in D_k.
\end{equation}
The key assumption here is stationarity: $P_{k,1} = P_{k,2} = \dots = P_{k,T} = P_k$. The optimization objective averages the expected risk over the static distributions:
\begin{equation}
    \min_{w} \sum_{k=1}^K \mathbb{E}_{(x,y)\sim P_k} L\big(f(w; x),\, y\big). \tag{FedAvg-Obj}
\end{equation}
Under this formulation, the temporal ordering of samples is ignored, and any temporal drift is treated as noise or distribution shift rather than a structured signal.
\paragraph{Comparison with Online FL} While Online FL involves sequential updates, it fundamentally differs from FTA in its definition of "time." In Online FL, the index $\tau$ refers to optimization rounds (or the arrival of a data stream) rather than the physical evolution of the underlying subject state. At round $\tau$, client $k$ receives a batch $B_{k,\tau}$ from a distribution $Q_{k,\tau}$:
\begin{equation}
    B_{k,\tau} \sim Q_{k,\tau}.
\end{equation}
The model is updated via a streaming rule, such as:
\begin{equation}
    w_{\tau+1} = w_\tau - \eta \nabla L(B_{k,\tau}). \tag{OnlineFL-Update}
\end{equation}
Here, $Q_{k,\tau}$ represents the distribution of the data stream at step $\tau$, which may be arbitrary or adversarial. In contrast, FTA's $P_{k,t}$ represents the \emph{physical temporal state} of the subject (e.g., a patient's condition in year $t$). Online FL focuses on regret minimization or adaptability to concept drift in the optimization landscape, whereas FTA focuses on modeling the structured longitudinal dependencies of the data itself.
\subsection{Summary of Differences}
We summarize the distinctions between the three paradigms in Table~\ref{tab:comparison}. FTA is the only framework that explicitly incorporates sequential modeling of physical time into the federated objective.
The FTA framework can be viewed as a generalization of the standard settings. 
\begin{itemize}
    \item \textbf{Relation to FedAvg:} If the distribution is stationary, i.e., $P_{k,t} \equiv P_k$ for all $t$, the FTA objective (Eq. \ref{eq:fta}) reduces to the standard FedAvg objective (scaled by $T$).
    \item \textbf{Relation to Online FL:} If we treat the physical time steps $t$ solely as indices for incoming data batches without enforcing longitudinal dependencies between $x_{k,t}$ and $x_{k,t-1}$, FTA resembles Online FL. However, FTA specifically leverages the sequential correlation (trajectory) inherent in $P_{k,t}$ for prediction.
\end{itemize}



\end{document}